
\documentclass{article}

\usepackage{microtype}
\usepackage{graphicx}
\usepackage{subcaption}
\usepackage{booktabs} 

\usepackage{hyperref}


\usepackage{algorithmicx}


\usepackage[accepted]{icml2023}

\usepackage{amsmath}
\usepackage{amssymb}
\usepackage{mathtools}
\usepackage{amsthm}
\usepackage{bbm}
\usepackage{enumitem}
\usepackage[noend]{algpseudocode}

\usepackage[capitalize,noabbrev]{cleveref}

\theoremstyle{plain}
\newtheorem{theorem}{Theorem}[section]

\newtheorem{corollary}[theorem]{Corollary}
\theoremstyle{definition}
\newtheorem{definition}[theorem]{Definition}

\theoremstyle{remark}

\usepackage[textsize=tiny]{todonotes}

\icmltitlerunning{PAC Prediction Sets for Large Language Models of Code}

\begin{document}

\twocolumn[
\icmltitle{PAC Prediction Sets for Large Language Models of Code}




\begin{icmlauthorlist}
\icmlauthor{Adam Khakhar}{a}
\icmlauthor{Stephen Mell}{a}
\icmlauthor{Osbert Bastani}{a}
\end{icmlauthorlist}

\icmlaffiliation{a}{University of Pennsylvania, USA}

\icmlcorrespondingauthor{Adam Khakhar}{ak@alumni.upenn.edu}

\icmlkeywords{Machine Learning, ICML}

\vskip 0.3in
]



\printAffiliationsAndNotice{}  

\begin{abstract}
Prediction sets have recently been shown to be a promising strategy for quantifying the uncertainty of deep neural networks in a way that provides theoretical guarantees. However, existing techniques have largely targeted settings where the space of labels is simple, so prediction sets can be arbitrary subsets of labels. For structured prediction problems where the space of labels is exponential in size, even prediction sets containing a small fraction of all labels can be exponentially large. In the context of code generation, we propose a solution that considers a restricted set of prediction sets that can compactly be represented as partial programs, which are programs with portions replaced with holes. Given a trained code generation model, our algorithm leverages a programming language's abstract syntax tree to generate a set of programs such that the correct program is in the set with high-confidence. Valuable applications of our algorithm include a Codex-style code generator with holes in uncertain parts of the generated code, which provides a partial program with theoretical guarantees. We evaluate our approach on PICARD (a T5 model for SQL semantic parsing) and Codex (a GPT model for over a dozen programming languages, including Python), demonstrating that our approach generates compact PAC prediction sets. This is the first research contribution that generates PAC prediction sets for generative code models.\footnote{Our code is available at \url{https://github.com/adamkhakhar/python-pac-code-prediction-set}.}

\end{abstract}

\section{Introduction}

There has been a great deal of recent interest in uncertainty quantification for deep learning models~\cite{sok-uncertainty-quant}. These techniques are critical for applications of machine learning, where machine learning models are used to guide human decision-makers (e.g., medical or financial decisions), since they convey confidence in the model predictions that can be used to aid downstream decisions.

One promising strategy is \emph{conformal prediction}~\cite{pred-set-iid-1,pred-set-iid-2}, a statistical technique that has recently been adapted to machine learning~\cite{set-cov-shift-1,bastani-pac,pred-set-iid-4}. These techniques focus on constructing \emph{prediction sets}, which capture uncertainty by predicting sets of labels rather than individual labels. A benefit of these approaches is that they come with provable guarantees---typically, that the prediction set includes the ground truth label with high probability.




Most of the work so far has focused on settings where the input $x$ may be complex, but the label $y$ has a relatively simple structure, such as classification and regression.\footnote{Regression problems have an infinite label space, but existing approaches restrict to prediction sets in the form of intervals, which automatically satisfy the monotonicity property described below.} While there is no theoretical obstacle to considering more structured labels, the prediction sets can easily become uninterpretable when the label space is complex as the uncertainty in the model is not easily identifiable. We consider the case of large language models for code generation, where the output is a sequence of tokens. For such models, the output space is exponentially large in the length of the generated sequence, meaning that even if a prediction set contains only a small fraction of labels, it may still be exponentially large.

While recent work has studied structured prediction problems such as object detection and image segmentation~\cite{schulz_behnke}, they avoid this problem by breaking up the prediction space into individual components for which compact prediction sets can be constructed. For instance, for object detection, while the output is a list of bounding boxes, we can construct a prediction set of bounding boxes that may occur, and then separately construct a prediction set for each bounding box.

A natural strategy to construct prediction sets for structured outputs is that rather than considering prediction sets consisting of arbitrary subsets of labels, we can restrict them to ones that can be represented in a compact way. However, existing prediction set algorithms are not designed to search over these structured spaces, meaning that new algorithms are necessary for inferring structured prediction sets.

In this paper, we study prediction sets for code generation, where the labels are sequences of tokens corresponding to valid lines of code. Recent work has demonstrated that large language models based on the GPT architecture~\cite{https://doi.org/10.48550/arxiv.2005.14165} are effective strategies for code generation from context~\cite{https://doi.org/10.48550/arxiv.2107.03374}. For this domain, we propose to represent prediction sets using \emph{partial programs}~\cite{solar2008program},
which are programs with some portions replaced with \emph{holes}. A partial program implicitly represents the prediction set of all programs that can be obtained by filling its holes to produce a valid program. Partial programs are both a natural way of presenting sets of programs to the user and provide needed structure on the space of sets of programs. Prior work has constructed partial programs using large language models~\citep{guo2021learning}, but they do not interpret them as prediction sets, and furthermore do not provide any theoretical guarantees.

To construct PAC prediction sets in this setting, we propose a novel algorithm that modifies an existing one~\cite{bastani-pac} to account for the restricted search space. This algorithm operates by establishing a 1D search space over prediction sets---given a \emph{scoring function} $f(x,y)\in\mathbb{R}$ mapping features $x\in\mathcal{X}$ to labels $y\in\mathcal{Y}$ (typically the probabilities predicted by a traditional neural network).

The main challenge adapting this strategy to prediction sets represented by partial programs is that the search space of partial programs is not 1D. To address this problem, we artificially construct a 1D search space by \emph{pre-selecting} a set of partial programs to consider. As long as this set can be represented in the form in equation (\ref{eqn:predictionset}) for \emph{some} scoring function, then we can use an existing prediction set algorithm to select among prediction sets in this set. The key condition the set must satisfy is \emph{monotonicity} ---i.e., each partial program must be either a superset or subset of each of the others. Then, to compute this set, we devise an integer linear program that encodes the monotonicity constraint along with other constraints on the structure of the partial programs.
We empirically evaluate our approach on 
both PICARD~\cite{picard}, a state-of-the-art semantic parser based on T5~\cite{https://doi.org/10.48550/arxiv.1910.10683},
trained on the Spider dataset~\cite{spider} for SQL semantic parsing, as well as Codex~\cite{https://doi.org/10.48550/arxiv.2107.03374}, a GPT language model fine-tuned on publicly available GitHub code with proficiency in over a dozen programming languages. Our experiments demonstrate that our approach can generate prediction sets of the desired form that satisfy the PAC guarantee, while significantly outperforming a natural baseline in terms of a measure of prediction set size.

\textbf{Example.} In Figure~\ref{fig:target-ast}, we show an example of an SQL query from the Spider dataset along with the \emph{abstract syntax tree} exposing its syntactic structure. In Figure~\ref{fig:predicted-set}, we show the SQL query (and corresponding AST) as predicted by PICARD. Below the predicted query, we show the partial program obtained by our algorithm (it is obtained by deleting the nodes in the AST marked by red crosses). This partial program represents the prediction set of all programs that can be obtained by filling the ?? portions with some expressions. It is guaranteed to contain the ground truth query with high probability.
 
\textbf{Contributions.} Our contributions include the notion of partial programs as prediction sets for code generation, an algorithm for constructing PAC prediction sets in this setting, and an empirical evaluation demonstrating the efficacy of our algorithm. Finally, while we focus on code generation, we believe our techniques can be adapted to construct prediction sets for other structured prediction problems.

\begin{figure*}[h]
\begin{subfigure}{.45\textwidth}
\centering
\includegraphics[width=3.1 in]{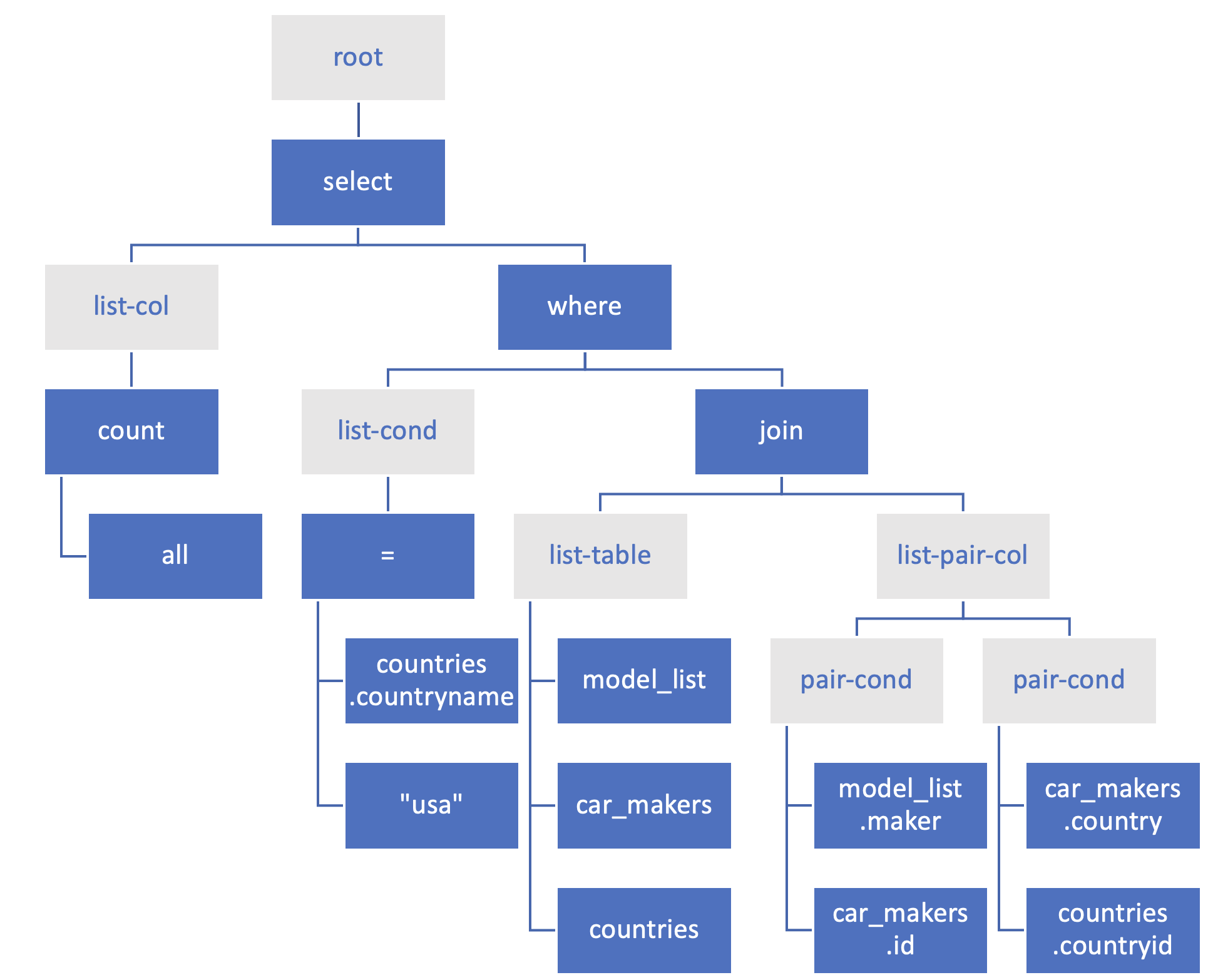} \\
{\scriptsize
\begin{align*}
&\texttt{SELECT COUNT(*) FROM countries AS t1} \\
&\texttt{JOIN car\_makers AS t2 on t1.countryid = t2.country} \\
&\texttt{JOIN model\_list as t3 on t2.id=t3.maker} \\
&\texttt{WHERE t1.countryname = "usa";}
\end{align*}
}
\caption{Ground truth abstract syntax tree (AST) and SQL query from the Spider dataset~\cite{spider}.}
\label{fig:target-ast}
\end{subfigure}
\hfill
\begin{subfigure}{.45\textwidth}
\centering
\includegraphics[width=3.1 in]{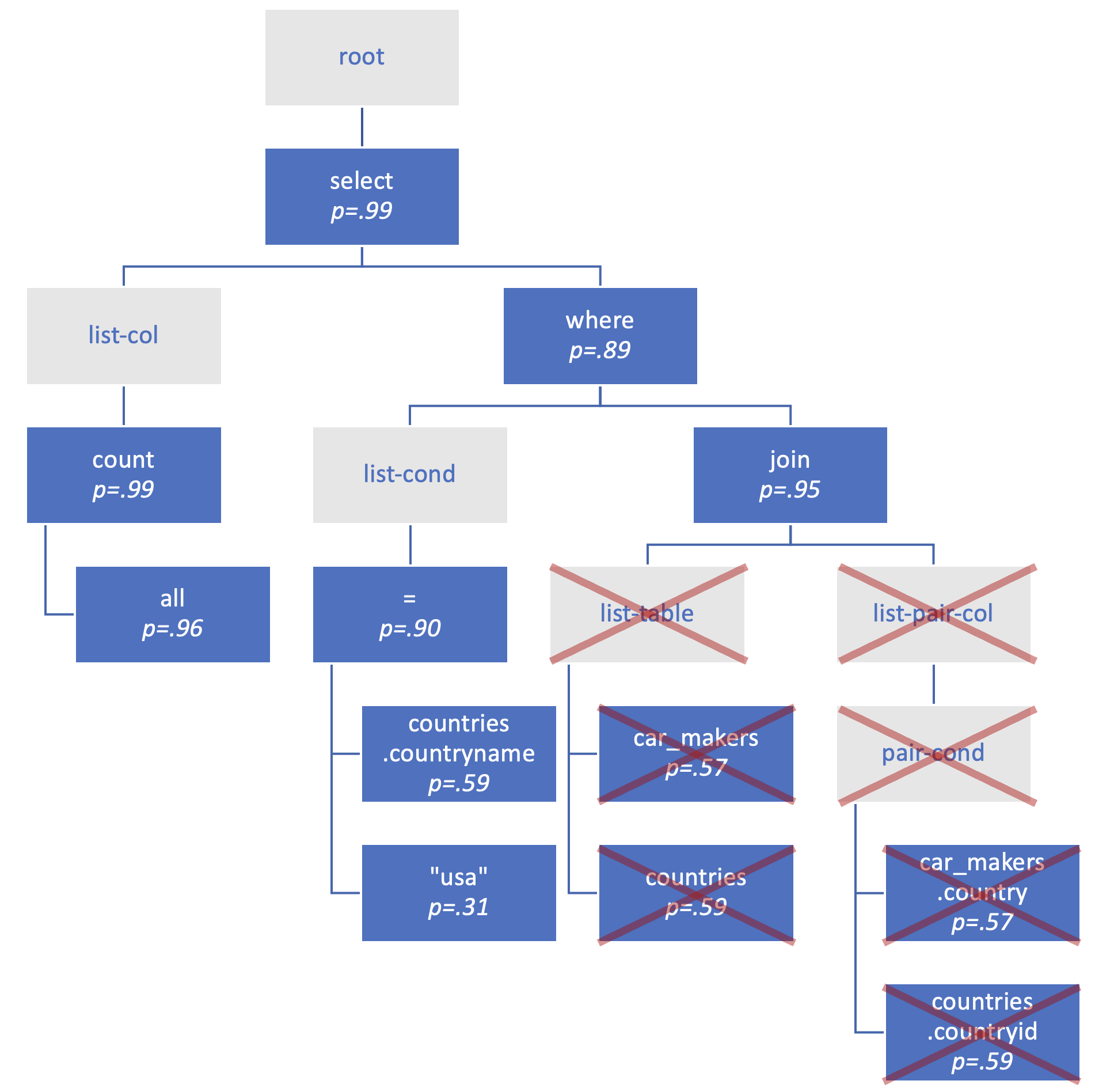}
{\scriptsize
\begin{align*}
&\texttt{SELECT COUNT(*) FROM countries AS t1} \\
&\texttt{JOIN car\_makers as t2 on t1.countryid = t2.country} \\
&\texttt{WHERE t1.countryname = "usa";} \\\\
&\texttt{SELECT COUNT(*) FROM countries AS t1} \\
&\texttt{JOIN ?? on ?? WHERE t1.countryname = "usa";}
\end{align*}}
\caption{Predicted AST, predicted SQL query, and prediction set for the same task as in Figure~\ref{fig:target-ast} with $m=2$ holes.}
\label{fig:predicted-set}
\end{subfigure}
\caption{
Example of ground truth SQL query, predicted SQL query, and the constructed prediction set. Note that the predicted query is incorrect. With the two holes in the SQL query, the predicted AST code prediction set contains the ground truth query.}
\label{fig:example}
\end{figure*}

\section{Background}

In this section, we introduce the notion of PAC prediction sets, as well as the code generation task.

\subsection{PAC Prediction Sets} \label{PAC-SET}

PAC prediction sets based on conformal prediction have recently been proposed as a rigorous way to quantify uncertainty for deep neural networks~\cite{bastani-pac}. A \emph{prediction set model} is a function $F:\mathcal{X}\to2^{\mathcal{Y}}$ mapping inputs $x$ to sets of labels $F(x)\subseteq\mathcal{Y}$. We typically construct prediction sets based on a given \emph{scoring function} $f:\mathcal{X}\times\mathcal{Y}\to\mathbb{R}$, which maps features $x\in\mathcal{X}$ to labels $y\in\mathcal{Y}$ (with higher scores corresponding to higher likelihood of being the true label). Scoring functions are often neural networks. Given $f$, we consider the family of prediction set models with a single parameter $\tau \in \mathbb{R}$: 
\begin{equation} \label{eqn:predictionset}
F_{\tau}(x) \coloneqq \{y \in \mathcal{Y} \mid f(x,y) \geq \tau \},
\end{equation}
i.e., the set of all labels with score at least $\tau$. To define correctness, we assume that the data is sampled i.i.d. according to some distribution $p(x,y)$.
\begin{definition}
A parameter $\tau \in \mathbb{R}_{\geq 0}$ is \emph{$\epsilon$-correct} if
\begin{align*}
\mathbb{P}_{p(x,y)}\left[y \in F_{\tau}(x)\right] \geq 1 - \epsilon.
\end{align*}
\end{definition}
We let $\mathcal{T}_{\epsilon}\subseteq\mathbb{R}$ denote the set of $\epsilon$-correct $\tau$. Next, consider an estimator $\hat\tau:\mathcal{Z}^*\to\mathbb{R}$, where $\mathcal{Z}=\mathcal{X}\times\mathcal{Y}$ (with $\mathcal{Z}^*=\bigcup_{i=1}^\infty\mathcal{Z}^i$) is the set of calibration datasets. We let $p(Z)=\prod_{(x,y)\in Z}p(x,y)$ be the distribution over datasets.
\begin{definition}
\label{def:pacpredictionset}
An estimator $\hat{\tau}$ is \emph{$(\epsilon, \delta)$-probably approximately correct (PAC)} if
\begin{align*}
\mathbb{P}_{p(Z)}\left[\hat{\tau}(Z) \in \mathcal{T}_{\epsilon}\right] \geq 1 - \delta.
\end{align*}
\end{definition}
Our goal is to construct PAC prediction sets that are small in size, where size is defined as
\begin{align*}
\bar{S}(\tau)=\mathbb{E}_{p(x,y)}[S(F_\tau(x))],
\end{align*}
for some given size metric $S:2^{\mathcal{Y}}\to\mathbb{R}_{\ge0}$. Assuming $S$ is monotone (i.e., $Y\subseteq Y'$ implies $S(Y)\le S(Y')$), then larger values of $\tau$ correspond to smaller sizes. Thus, our goal is to maximize $\tau$ while guaranteeing that $\tau\in \mathcal{T}_{\epsilon}$.

For this setting, there exists an estimator $\hat{\tau}$ to construct a PAC prediction set. These algorithms optimize $\tau$ subject to a constraint on the number of errors in the validation set:
\begin{align}
\label{eqn:pacalgo}
\hat{\tau}(Z)=\operatorname*{\arg\max}_{\tau\in\mathbb{R}}\tau
~~\text{subj. to}~~
\sum_{i=1}^n\mathbbm{1}(y_i\not\in F_\tau(x_i))\le k,
\end{align}
where $k$ can be computed from $\epsilon$, $\delta$, and $n$ (see~\cite{bastani-pac} for details). In other words, we choose the largest $\tau$ subject to a bound $k$ on the number of errors made by the prediction set.

\subsection{Code Generation}

We consider the problem of generating code in some language, where the label is a sequence of tokens---i.e., we have $\mathcal{Y}=\Sigma^*$, where $\Sigma$ is the set of tokens. While our approach is general, we focus on strategies based on large language models (LLMs), where the input $x$ consists of the generation context (e.g., several lines of code preceding the line to be generated); recent work has demonstrated that LLMs are very effective models for solving this problem~\cite{textgen}. In this strategy, tokens represent parts or whole words~\cite{bpe}, which are then predicted, either autoregressively~\cite{https://doi.org/10.48550/arxiv.2210.14698} or sequence-to-sequence~\cite{https://doi.org/10.48550/arxiv.1409.3215}, to form a full target when concatenated.

\subsection{Abstract Syntax Trees}

Our strategy for constructing prediction sets relies on the \emph{abstract syntax tree} of the generated code. This data structure is similar to a parse tree obtained by using a context-free grammar (CFG) parser to parse a sentence, but aims to represent constructs in the code (e.g., variables, assignment statements, if statements, etc.) while abstracting away low-level syntax (e.g., parentheses matching).

Formally, given a (valid) sequence of tokens $y\in\Sigma^*$, an AST is a tree $T=(V,E)$, with nodes $v\in V$ and edges $(v,v')\in E\subseteq V\times V$. Each node $v\in V$ represents some subsequence of tokens in $y$ (the subsequence of a node must be contained in the subsequence of its parent). This subsequence is given by a mapping $\eta:V\to\mathbb{N}^2$, where $\eta(v)=(i,j)$ indicates that node $v$ corresponds to subsequence $y_i...y_{j-1}$ of $y$. Then, we assume that if $\eta(v)=(i,j)$, then its parent $v'$
satisfies $\eta(v')=(i',j')$ for some $i'\le i<j\le j'$ (here, the parent $v'$ is the unique node $v'\in V$ such that there is an edge $(v',v)\in E$).

ASTs are language-specific, and can also vary from one implementation to another; we assume the AST construction is provided by the user; standard implementions exist for all programming languages since ASTs are required to compile or interpret programs. Figure~\ref{fig:target-ast} depicts an examples of an AST for an SQL query. In our approach, we use the AST to restrict the space of prediction sets we consider. Intuitively, we consider prediction sets that can be represented by replacing some subtree of the AST with a ``hole'' representing an arbitrary sequence of tokens.

\section{PAC Prediction Sets for Code Generation}
\label{sec:problem}

In this section, we formalize the notion of a PAC prediction set for code generation.

\subsection{Partial Programs}

Our algorithm takes as input a trained model $f$ represented as a scoring function $f:\mathcal{X}\times\mathcal{Y}\to\mathbb{R}$, along with a calibration dataset $D$ of input-output pairs $Z = \{(x_i,y_i)\}_{i=1}^n$. We assume an algorithm that, given an input $x$, uses $f$ to produce a label $y=\bar{f}(x)$---e.g., we may compute $y$ using a greedy decoding strategy.

Our goal is to use $f$ to construct a PAC prediction set. The challenge is that there are an enormous number of possible labels (exponential in the length of the generated sequence), meaning a traditional prediction set will not be human-interpretable. Thus, we need to constrain the search space over prediction set models.

To this end, we consider prediction sets defined by \emph{partial programs}, which are sequences with special tokens called \emph{holes}; each hole can be \emph{filled} with an arbitrary subsequence, and filling all holes produces a complete sequence. Formally, we denote the special token by $??$; then, a partial program is a sequence $y\in(\Sigma\cup\{??\})^*$. Partial programs also require that holes occur in a way compatible with the context-free grammar defining the programming language; we describe this condition in Section~\ref{sec:aststructure}. Suppose that $y$ has $k$ holes:
\begin{align*}
y=y_0\;??\;y_1...\;y_{k-1}??\;y_k,
\end{align*}
where $y_0,y_1,...,y_k\in\Sigma^*$. Then, given $\bar{y}_1,...,\bar{y}_k\in\Sigma^*$, we can fill the holes in $y$ using these sequences to obtain
\begin{align*}
\bar{y}
=\text{fill}(y;\bar{y}_1,...,\bar{y}_k)
=y_0\bar{y}_1y_1...y_{k-1}\bar{y}_ky_k,
\end{align*}
i.e., replace the $i$th hole with $\bar{y}_i$. We call $\bar{y}\in\Sigma^*$ a \emph{completion} of $y$; we denote the set of completions of $y$ by
\begin{align*}
\pi(y)\coloneqq\{\bar{y}\in\Sigma^*\mid\exists y_1,...,y_k\in\Sigma^*\,.\,\bar{y}=\text{fill}(y;y_1,...,y_k)\},
\end{align*}
i.e., all possible ways in which the holes in $y$ can be filled (note that $\pi(y)$ may be an infinite set).

Now, our strategy is to restrict prediction sets to sets of programs represented by partial programs. At a high level, our algorithm starts with a concrete program $\bar{y}=\bar{f}(x)$, and then replaces a certain number of subsequences in $\bar{y}$ with holes to obtain a partial program $y$ such that $\pi(y)$ contains the true program $y^*$ with high probability.

\subsection{Leveraging AST Structure}
\label{sec:aststructure}

An additional constraint is that we want to remove subsequences that correspond to full ``units of code''. For example, if we start with the program \texttt{print([1,2,3])}, we might remove a subsequence to obtain \texttt{print([??)}; however, this partial program can be hard for the user to understand since the brackets do not match. Instead, we may want to require the algorithm to either remove only the elements of the array to obtain \texttt{print([??])}, or remove the full array to obtain \texttt{print(??)}. Thus, we only consider removing subsequences corresponding to nodes in the AST $T=(V,E)$ of $\bar{y}$---i.e., sequences $\bar{y}_i...\bar{y}_{j-1}$ with a hole $??$, where $(i,j)=\eta(v)$ for some $v\in V$.

\subsection{Bounded Number of Holes}

Even with the AST constraint, we can still obtain very complicated partial programs by removing a large number of leaf nodes. For example, we could remove three nodes from \texttt{print([1,2,3])} to obtain the partial program \texttt{print([??,??,??])}; for longer lists, the resulting partial program would be even more complex. A simpler solution would be to leave the entire contents of the list as a single hole: \texttt{print([??])}. To this end, we impose a constraint that at most $m$ subtrees of the AST are replaced with holes, resulting in a partial program with at most $m$ holes. Here, $m\in\mathbb{N}$ is a given hyperparameter; for instance, we may take $m$ to be 1-3 holes. In our experiments, we find that $m=1$ works well in practice.


\subsection{Problem Formulation}

Our goal is to design an algorithm $\mathcal{A}$ that maps a validation set $Z=\{(x_i,y_i)\}_{i=1}^n$ and a new input $x\in\mathcal{X}$ to a partial program $y=\mathcal{A}(x;Z)$ such that $\mathcal{A}$ satisfies the PAC property given in Definition~\ref{def:pacpredictionset}---i.e., we have
\begin{align*}
\mathbb{P}_{p(Z)}\left[\mathbb{P}_{p(x,y)}\left[y\in\pi(\mathcal{A}(x;Z))\right]\ge1-\epsilon\right]\ge1-\delta,
\end{align*}
where $p(Z)=\prod_{i=1}^np(x_i,y_i)$. As before, we additionally want to minimize the expected size of $\pi(\mathcal{A}(x;Z))$. As we describe in the next section, our algorithm constructs $y=\mathcal{A}(x;Z)$ by starting from the highest probability prediction $\bar{y}=\operatorname*{\arg\max}f(x,y)$ (more specifically, an approximate maximum based on greedy decoding), and then removes subtrees of the AST of $\bar{y}$ to obtain $y$. Then, we define size to be the fraction of nodes retained in $y$---i.e., $S(y)=|T'|/|T|$, where $T'$ is the AST of $y$ and $T$ is the AST of $\bar{y}$.

\section{Structured Prediction Set Algorithm} \label{algo}

We describe our algorithm $\mathcal{A}$ (in Algorithm~\ref{alg:main}) for constructing structured PAC prediction sets for code generation.

\subsection{General Strategy}

Recall that existing approaches to constructing prediction sets rely on a 1D search space over parameters $\tau\in\mathbb{R}$. Our approach is to design such a 1D search space and then apply existing prediction set algorithms.  However, we cannot directly use the scoring function $f(x,y')$ to construct prediction sets, since its level sets $\{y'\mid f(x,y')\ge\tau\}$ may not have the desired structure described in section~\ref{sec:problem}---i.e., they may have not have form $\pi(y)$ for some partial program $y$.

Instead, we design a modified scoring function $\tilde{f}(x,y')$
whose level sets have the form $\{y'\mid\tilde{f}(x,y')\ge\tau\}=\pi(y)$ for some partial program $y$.
To achieve this goal, we first note that for a single input $x$, if our goal is to obtain a prediction set of the form $\pi(y)$ 
for some partial program $y$, we need $\tilde{f}(x,y')>\tau$ for all $y'\in\pi(y)$ and $\tilde{f}(x,y')<\tau$ otherwise. For instance, we could define $\tilde{f}(x,y')=\tau+\gamma$ for all $y'\in\pi(y)$ (for an arbitrarily small $\gamma\in\mathbb{R}_{>0}$), and $\tilde{f}(x,y')=\tau-\gamma$ otherwise.

More generally, our strategy can be envisioned as follows:
\begin{enumerate}[topsep=0pt,itemsep=0ex,partopsep=1ex,parsep=0ex,leftmargin=3ex]
\item Design an algorithm $\tilde{\mathcal{A}}$ such that for every input $x$ and parameter $\tau$, it outputs a partial program $y=\tilde{\mathcal{A}}(x,\tau)$ that corresponds to a desired prediction set $\pi(y)$.
\item Construct a modified scoring function $\tilde{f}$ such that $\{y'\mid\tilde{f}(x,y')\ge\tau\}=\pi(y)$ for some $y$.
\item Use an existing PAC prediction set algorithm to choose a threshold $\hat{\tau}(Z)$ based on $\tilde{f}$.
\end{enumerate}
The key constraint on $\tilde{\mathcal{A}}$ is that we have to be able to construct $\tilde{f}$. We saw that we can construct $\tilde{f}$ for a single triple $(x,\tau,y)$, but given multiple triples, such a $\tilde{f}$ may not exist.

For $\tilde{f}$ to exist, these triples must satisfy \emph{monotonicity}---i.e., for two parameters $\tau,\tau'\in\mathbb{R}$ such that $\tau\le\tau'$, the corresponding prediction sets satisfy $F_{\tau}(x)\supseteq F_{\tau'}(x)$. In other words, as the threshold on the score decreases, the size of the prediction set becomes larger \emph{uniformly} for all inputs $x\in\mathcal{X}$. Thus, we need to ensure that our partial programs also satisfy this property---i.e., if $\tau\le\tau'$, then letting $y=\tilde{\mathcal{A}}(x,\tau)$ and $y'=\tilde{\mathcal{A}}(x',\tau)$, we have $\pi(y)\supseteq\pi(y')$.

We impose a stronger constraint that implies monotonicity: if $\tau\le\tau'$, then we require that every node in $y=\tilde{\mathcal{A}}(x,\tau)$ also occurs in $y'=\tilde{\mathcal{A}}(x,\tau')$. It is easy to see that if the AST of $y$ contains a subset of the nodes in the AST of $y'$, then $\pi(y)\supseteq\pi(y')$. For simplicity, we refer to this stronger constraint as monotonicity for the remainder of the paper.

\begin{theorem}
\label{thm:main}
Assume that $\tilde{\mathcal{A}}$ is monotone. There exists a scoring function $\tilde{f}$ such that for all $x\in\mathcal{X}$, we have
\begin{align*}
\pi(\tilde{\mathcal{A}}(x,\tau))=\tilde{F}_{\tau}(x)\coloneqq\{y'\mid\tilde{f}(x,y')\ge\tau\}
\end{align*}
for all $\tau\in\mathbb{R}$ except a finite subset.
\end{theorem}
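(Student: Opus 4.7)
The plan is to construct $\tilde f$ pointwise in $x$: for each pair $(x, y')$ I would define
\begin{equation*}
\tilde f(x, y') \;:=\; \sup\bigl\{\tau \in \mathbb{R} \,\big|\, y' \in \pi(\tilde{\mathcal{A}}(x, \tau))\bigr\},
\end{equation*}
with the convention $\sup\emptyset = -\infty$. The easy half of the level-set identity is immediate from this definition: if $y' \in \pi(\tilde{\mathcal{A}}(x, \tau))$ then $\tilde f(x, y') \ge \tau$.

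For the converse inclusion I would lean on two consequences of the monotonicity hypothesis. First, monotonicity in the node-inclusion sense implies that $\tau \mapsto \pi(\tilde{\mathcal{A}}(x, \tau))$ is set-decreasing, so whenever $y' \in \pi(\tilde{\mathcal{A}}(x, \tau''))$ for some $\tau'' \ge \tau$, automatically $y' \in \pi(\tilde{\mathcal{A}}(x, \tau))$. Second, for each fixed $x$ the image of $\tilde{\mathcal{A}}(x, \cdot)$ is finite, because $\tilde{\mathcal{A}}$ only produces partial programs by deleting subsets of the single (finite) AST of the base prediction $\bar f(x)$. A monotone map into a finite set is a step function with finitely many jumps, so there are finitely many thresholds $\tau_1(x) < \cdots < \tau_N(x)$ outside of which $\tilde{\mathcal{A}}(x, \cdot)$ is locally constant.

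Now fix $\tau \notin \{\tau_1(x), \ldots, \tau_N(x)\}$ and pick $\varepsilon > 0$ small enough that $\tilde{\mathcal{A}}(x, \cdot)$ is constant on $(\tau - \varepsilon, \tau + \varepsilon)$. If $\tilde f(x, y') \ge \tau$, the sup produces some witness $\tau''$ with $y' \in \pi(\tilde{\mathcal{A}}(x, \tau''))$ and $\tau'' > \tau - \varepsilon$. Either $\tau'' \in (\tau - \varepsilon, \tau + \varepsilon)$, in which case $\tilde{\mathcal{A}}(x, \tau'') = \tilde{\mathcal{A}}(x, \tau)$ gives the conclusion directly, or $\tau'' \ge \tau + \varepsilon \ge \tau$, in which case the first consequence of monotonicity still yields $y' \in \pi(\tilde{\mathcal{A}}(x, \tau))$. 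Combining both directions proves the identity on $\mathbb{R} \setminus \{\tau_1(x), \ldots, \tau_N(x)\}$, as required.

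The hardest part is the boundary behaviour at the jump points themselves: whether the sup defining $\tilde f$ is attained at a $\tau_j(x)$ depends on a left-/right-continuity convention that $\tilde{\mathcal{A}}$ is not required to satisfy, so the two sides of the claimed identity can genuinely disagree at those isolated thresholds. This is precisely why the theorem permits a finite exceptional set; the real content of the argument is that monotonicity together with finiteness of the image of $\tilde{\mathcal{A}}(x, \cdot)$ concentrate every such disagreement on this (finite, $x$-dependent) set.
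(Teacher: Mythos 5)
Your proof is correct and follows essentially the same route as the paper's: both define $\tilde f(x,y')$ as (in effect) the largest threshold at which $y'$ still lies in the nested prediction set, use monotonicity plus finiteness of the possible partial programs for each $x$ to get a finite set of jump thresholds, and exclude exactly those thresholds to obtain the level-set identity. Your supremum formulation merely packages the paper's explicit piecewise-constant assignment of values $\tau_i$ more compactly, and your closing remark about boundary behaviour matches the paper's observation that the value at a jump point may be either adjacent partial program.
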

\begin{proof}
Recall that we have assumed that we choose $y$ to be a subset of the most likely program $\bar{y}=\operatorname*{\arg\max}_{y'}f(x,y)$ (i.e., remove some subset of nodes in $\bar{y}$). Thus, the space of possible partial programs $y$ for a given input $x$ is finite. Suppose the partial programs encountered by enumerating $\tau$ from $+\infty$ to $-\infty$ is $y_1,...,y_k$; this chain must be finite due to monotonicity. Also, $\pi(y_1)\supseteq\pi(y_2)\supseteq...\supseteq\pi(y_k)$.

Next, let the value of $\tau$ at which $\tilde{\mathcal{A}}(x,\tau)$ changes from $y_{i-1}$ to $y_i$ be $\tau_i$, and define $\tau_1=-\infty$ and $\tau_{k+1}=+\infty$. Then, we have $\tilde{\mathcal{A}}(x,\tau)=y_i$ for $\tau_i<\tau<\tau_{i+1}$ (the value at $\tau_i$ can be either $y_i$ or $y_{i-1}$). Now, we can define
\begin{align*}
\tilde{f}(x,y')=\tau_i\quad\text{where}\quad
y'\in\pi(y_i)\wedge y'\not\in\pi(y_{i+1}),
\end{align*}
noting that by monotonicity, the condition holds for at most one $i$; if it does not hold for any $i$, but $y'\in\pi(y_k)$, then we take $\tilde{f}(x,y')=\infty$; otherwise, we take $\tilde{f}(x,y')=-\infty$.\footnote{If we assume that $y_1=\;??$ is the partial program consisting of a single hole, $y'\in\pi(y_1)$ for all $y'$, so this last case is unnecessary.}
This strategy ensures that the scores $\tilde{f}(x,y')$ fall between the $\tau_i$. It is easy to see that with this choice of $\tilde{f}$, we have
\begin{align*}
\pi(y_i)=\{y'\mid\tilde{f}(x,y')\ge\tau\}
\end{align*}
for all $\tau\in\mathbb{R}\setminus\{\tau_1,...,\tau_k\}$.
By construction, we have $\tilde{\mathcal{A}}(x,\tau)\in\{y_i\}_{i=1}^k$, so the claim follows.
\end{proof}

In our algorithm, can avoid problematic values of $\tau$ (i.e., the finite subset excluded in the statement of Theorem~\ref{thm:main}) simply by reducing $\hat\tau(Z)$ by a tiny amount---i.e., we use $\hat\tau(Z)-\gamma$ for an arbitrarily small $\gamma\in\mathbb{R}_{>0}$.
\begin{corollary}
\label{cor:main}
For sufficiently small $\gamma\in\mathbb{R}_{>0}$, the prediction set $\pi(\tilde{\mathcal{A}}(x,\hat\tau(Z)-\gamma))$ is PAC, where $\hat\tau(Z)$ is obtained by using an existing PAC prediction set algorithm with $\tilde{f}$.
\end{corollary}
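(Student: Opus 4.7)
The plan is to derive the corollary by composing the PAC guarantee of the estimator in Equation \ref{eqn:pacalgo} (applied to the scoring function $\tilde f$ provided by Theorem \ref{thm:main}) with the identification $\pi(\tilde{\mathcal{A}}(x,\tau)) = \tilde F_\tau(x)$ that Theorem \ref{thm:main} guarantees at all but finitely many thresholds. First, I apply the existing PAC prediction set algorithm with $\tilde f$ in place of $f$, obtaining $\hat\tau(Z)$ for which
\begin{align*}
\mathbb{P}_{p(Z)}\!\left[\,\mathbb{P}_{p(x,y)}\!\left[y\in\tilde F_{\hat\tau(Z)}(x)\right]\ge 1-\epsilon\,\right]\ge 1-\delta,
\end{align*}
which is exactly Definition \ref{def:pacpredictionset} specialized to the scoring function $\tilde f$.

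Next, I pass to the shifted threshold $\hat\tau(Z)-\gamma$ using two monotonicity observations. Since $\tilde F_\tau(x)$ is non-increasing in $\tau$, we have $\tilde F_{\hat\tau(Z)-\gamma}(x)\supseteq\tilde F_{\hat\tau(Z)}(x)$ for every $\gamma\ge 0$, so the PAC inequality still holds when $\hat\tau(Z)$ is replaced by $\hat\tau(Z)-\gamma$. Separately, Theorem \ref{thm:main} supplies $\pi(\tilde{\mathcal{A}}(x,\tau)) = \tilde F_\tau(x)$ for all $\tau$ outside a finite set of transition thresholds, so I pick $\gamma>0$ small enough that $\hat\tau(Z)-\gamma$ sits in the open interval immediately below $\hat\tau(Z)$ and above the nearest smaller transition threshold. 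Then the identity yields $\pi(\tilde{\mathcal{A}}(x,\hat\tau(Z)-\gamma)) = \tilde F_{\hat\tau(Z)-\gamma}(x)$, and chaining the two containments transfers the PAC guarantee to the partial program.

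The main obstacle is that the exceptional set of thresholds in Theorem \ref{thm:main} depends on $x$, so no single $\gamma$ avoids every transition threshold across the support of $p(x,y)$ simultaneously. I would circumvent this by weakening equality to containment and leaning on the monotonicity of $\tilde{\mathcal{A}}$ from Section \ref{algo}: for any $\gamma>0$, $\tilde{\mathcal{A}}(x,\hat\tau(Z)-\gamma)$ has at least as many holes as it would at any $\tau\ge\hat\tau(Z)-\gamma$, and inspecting the construction of $\tilde f$ in Theorem \ref{thm:main} shows that dropping the threshold strictly below any transition point $\tau_i(x)$ at which $\hat\tau(Z)$ might land swaps the ambiguous boundary value for the partial program with the larger prediction set. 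Hence
\begin{align*}
\pi(\tilde{\mathcal{A}}(x,\hat\tau(Z)-\gamma)) \supseteq \tilde F_{\hat\tau(Z)}(x)
\end{align*}
holds for every $x$ and every sufficiently small $\gamma>0$. Combined with the PAC inequality for $\tilde F_{\hat\tau(Z)}$ derived above, this gives the PAC property of Definition \ref{def:pacpredictionset} for $\pi(\tilde{\mathcal{A}}(x,\hat\tau(Z)-\gamma))$, completing the argument.
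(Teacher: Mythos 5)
Your proof follows essentially the same route as the paper's: apply the existing PAC algorithm to $\tilde{f}$ to obtain the guarantee for $\tilde{F}_{\hat\tau(Z)}$, establish the containment $\pi(\tilde{\mathcal{A}}(x,\hat\tau(Z)-\gamma))\supseteq\tilde{F}_{\hat\tau(Z)}(x)$, and conclude since coverage is monotone under supersets. The only difference is bookkeeping at the exceptional thresholds---the paper splits into the two cases that either $\hat\tau(Z)$ or $\hat\tau(Z)-\gamma$ is non-problematic, while you handle the $x$-dependence of the exceptional set by weakening equality to containment via the monotonicity of $\tilde{\mathcal{A}}$, a point the paper's proof glosses over---so this is the same argument, stated slightly more carefully.
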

\begin{proof}
Note that for sufficiently small $\gamma$, either $\hat\tau(Z)$ or $\hat\tau(Z)-\gamma$ is not problematic. If the former holds, then
\begin{align*}
\pi(\tilde{\mathcal{A}}(x,\hat\tau(Z)-\gamma))
\supseteq
\pi(\tilde{\mathcal{A}}(x,\hat\tau(Z)))
=
\tilde{F}_{\hat\tau(Z)}(x).
\end{align*}
If the latter holds, then
\begin{align*}
\pi(\tilde{\mathcal{A}}(x,\hat\tau(Z)-\gamma))
=
\tilde{F}_{\hat\tau(Z)-\gamma}(x)
\supseteq
\tilde{F}_{\hat\tau(Z)}(x).
\end{align*}
By assumption, $\tilde{F}_{\hat\tau(Z)}$ is PAC, so the claim follows.
\end{proof}
As a consequence, we can use $\pi(\tilde{\mathcal{A}}(x,\hat\tau(Z)-\gamma))$ as our PAC prediction set. It remains to describe our design of $\tilde{\mathcal{A}}$.

\subsection{Probabilities for ASTs}

First, we describe the objective that our choice of $\tilde{\mathcal{A}}$ uses to prioritize partial programs. A standard strategy for ranking candidate prediction sets is to consider the probability mass of labels in the set~\cite{angelopoulos2020uncertainty}. Then, we can prioritize prediction sets that cover higher probability mass, since they are more likely to contain the true label.

In our setting, the corresponding principle is to prioritize replacing portions of the program that are more uncertain (according to the original scoring function $f$) with holes, since these portions are the most likely to be incorrect compared to the true program. In particular, since the corresponding prediction set is constructed by filling holes with all possible subsequences, and since low-probability subtrees are the ones where other subsequences have higher probability, so replacing these subtrees with holes most increases the aggregate probability mass of the prediction set.

To formalize this notion, we assume that the AST of the top prediction $\bar{y}=\operatorname*{\arg\max}_y f(x,y)$ has probabilities associated with its leaf nodes.\footnote{Here, we assume access to the AST $T$, which is the case for all major languages (since the ability to construct $T$ is needed to interpret or compile the program $\bar{y}$). In addition, we assume that $\bar{y}$ is valid; in our experiments, we found that invalid code or unparseable output appears in less than 0.1\% of cases. When an invalid AST is produced, we can simply take additional samples; alternatively, techniques like PICARD impose constraints during generation to ensure that only valid programs are decoded.}
In particular, we assume that each leaf node $v$ in the AST $T$ of $\bar{y}$ is labeled with a value $\ell_v\in\mathbb{R}_{\ge0}$ denoting the negative log probability of $v$ conditioned on its ancestors\footnote{This probability model assumes that nodes are generated conditioned only on their ancestors. In practice, we often use sequence models that do not respect the structure of $T$, but as a heuristic, we can still use the predicted probability of the token labeling each leaf $v$ to construct $\ell_v$. Because the scoring function can be arbitrary, this heuristic does not invalidate our coverage guarantee.}---i.e., $\ell_v=-\log p(v\mid v_1,...,v_k)$.
Then, the negative log probability of the entire AST $T$ is
\begin{align*}
\ell_T=\sum_{v\in T}\ell_v.
\end{align*}
Now, if we replace a subtree with holes, we are considering enumerating all possible subtrees to fill that hole construct the prediction set. Thus, the aggregate negative log probability of that subtree goes from 
$\sum_{v\in\text{subtree}}\ell_v$ to $0$ (i.e., its probability becomes $1$).
That is, if we replace a subtree with a hole, we can simply drop those nodes from the sum in $\ell_T$.

Thus, we can label holes $v$ in an AST $T'$ with negative log probability $\ell_v=1$. Then, we can extend the definition for the negative log probability of a tree to trees with holes:
\begin{align*}
\ell_{T'}=\sum_{v\in T'}\ell_v.
\end{align*}
We use this objective to prioritize partial programs.

\begin{figure*}[h]
\begin{subfigure}{0.45\textwidth}
\includegraphics[width=\textwidth]{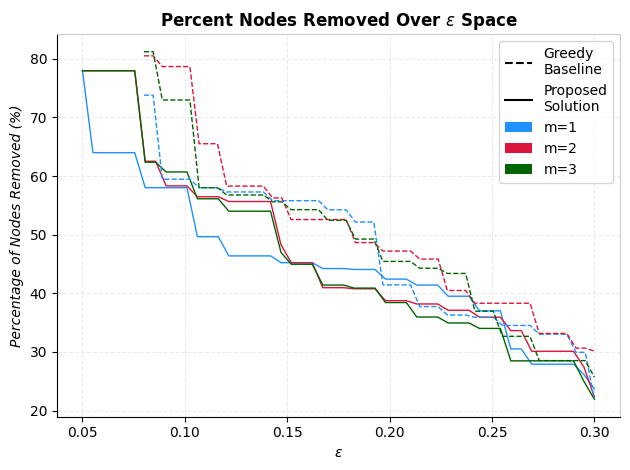}
\caption{Fraction of nodes removed as a function of $\epsilon$ for varying bound $m$ on the number of holes.}
\label{fig:spider_node_rm}
\end{subfigure}
\hfill
\begin{subfigure}{0.45\textwidth}
\includegraphics[width=\textwidth]{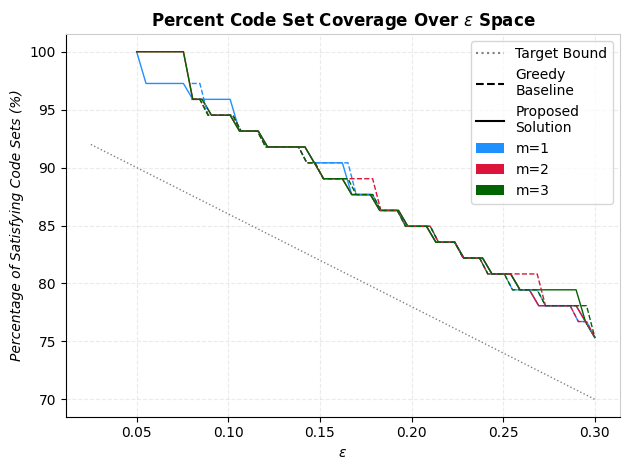}
\caption{Prediction set coverage rate as a function of $\epsilon$. The coverage always exceeds the desired $1-\epsilon$ rate (dotted line).}
\label{fig:spider_set_coverage}
\end{subfigure}
\caption{Node removal and code set coverage for varying number of holes and varying $\epsilon$ values. The experiment is conducted using the PICARD~\cite{picard} model evaluated on the SQL dataset~\cite{spider}.}
\label{fig:sql-experiments}
\end{figure*}

\begin{figure*}[h]
\begin{subfigure}{0.45\textwidth}
\centering
\includegraphics[width=\textwidth]{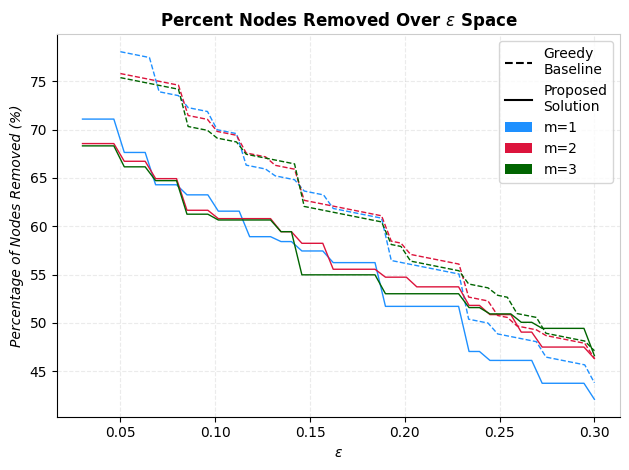}
\caption{Fraction of nodes removed as a function of $\epsilon$ for varying bound $m$ on the number of holes.}
\label{fig:codex_node_rm}
\end{subfigure}
\hfill
\begin{subfigure}{0.45\textwidth}
\centering
\includegraphics[width=\textwidth]{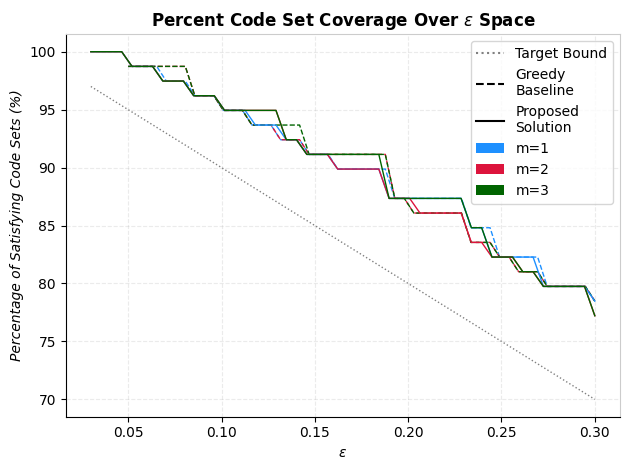}
\caption{Prediction set coverage as a function of $\epsilon$. The coverage always exceeds the desired $1-\epsilon$ rate (dotted line).}
\label{fig:codex_set_coverage}
\end{subfigure}
\caption{Node removal and code set coverage for varying number of holes and varying $\epsilon$ values. The experiment is conducted using OpenAI's Codex Model~\cite{https://doi.org/10.48550/arxiv.2107.03374} evaluated on the Python datasets \cite{hendrycksapps2021, https://doi.org/10.48550/arxiv.2107.03374}.}
\label{fig:codex-experiments}
\end{figure*}

\subsection{Computing Partial Programs via Optimization}

Next, we describe our algorithm $\tilde{\mathcal{A}}$ for constructing 
a partial program $y=\tilde{\mathcal{A}}(x,\tau)$ representing a prediction set for a given input $x$ and parameter $\tau$. Rather than compute this output for all possible $\tau$, we fix a finite subset of parameters $\tau_1<\tau_2<...<\tau_k$, and construct partial programs $y_1,y_2,...,y_k$ for these values. Then, we take $\tilde{\mathcal{A}}(x,\tau)=y_i$, where $\tau\in(\tau_{i-1},\tau_i]$. Then, $\tilde{\mathcal{A}}$ is formulated as an integer linear program (ILP), which we describe below.

\textbf{Optimization variables.}
Our program includes two sets of variables. First, for each node $v$ in $T$, where $T$ is the AST for $\bar{y}$, and for each parameter $\tau_i$ we include a binary variable $\alpha_{i,v}\in\{0,1\}$ indicating whether we remove the subtree rooted at $v$ from $\bar{y}$ to construct $y_i$. Second, for each node $v$, we include a binary variable $\beta_{i,v}\in\{0,1\}$ indicating whether we remove $v$ from $\bar{y}$ to construct $y_i$. We separately track removing subtrees from removing nodes so we can impose our bound on the number of subtrees removed.

\begin{algorithm}[t]
\caption{Our structured PAC prediction set algorithm.}
\label{alg:main}
\begin{algorithmic}
\Procedure{PredictionSet}{Scoring function $f$, Validation dataset $Z$, Confidence levels $\epsilon,\delta$}
\For{$(x,\bar{y})\in Z$}
\State $\alpha,\beta\gets$ Solve Eqs.~\ref{eqn:opt1}-\ref{eqn:opt7} for $(x,y)$
\State $y_i\gets$ Remove nodes $v$ such that $\alpha_{i,v}=1$
\EndFor
\State \textbf{return} $\tau_{i^*}$, where $i^*$ is the largest $i$ such that
\begin{align*}
\sum_{(x,\bar{y})\in Z}\mathbbm{1}(\bar{y}\not\in\pi(y_i))\le k(\epsilon,\delta,|Z|)
\end{align*}
\EndProcedure
\end{algorithmic}
\end{algorithm}

\textbf{Overall program.} Overall, our goal is to minimize the leaf nodes removed on average across $y_i$:
\begin{align}
\min_{\alpha,\beta}~\sum_{i=1}^k\sum_{v\in T}\beta_{i,v} \label{eqn:opt1}
\end{align}
subject to the following constraints:
\begin{align}
&\sum_{v\in V}\alpha_{i,v}\le m
\qquad(\forall i\in[k]) \label{eqn:opt2} \\
&\alpha_{i,v}\to\beta_{i,v}
\qquad(\forall v\in V,~i\in[k]) \label{eqn:opt3} \\
&\beta_{i,v}\to\beta_{i,v'}
\qquad(\forall(v,v')\in E) \label{eqn:opt4} \\
&\beta_{i,v}\rightarrow\alpha_{i,v}\vee\beta_{i,v'}
\qquad(\text{where }(v',v)\in E) \label{eqn:opt5} \\
&\beta_{i,v}\rightarrow\beta_{i+1,v}
\qquad(v\in V,~\forall i\in\{2,...,k\}) \label{eqn:opt6} \\
&\sum_{v\in T}\ell_v\cdot(1-\beta_{i,v})\le\tau_i
\qquad(\forall i\in[k]) \label{eqn:opt7}
\end{align}
We have used Boolean notation for clarity; we can enforce $\alpha\to\beta$ for $\alpha,\beta\in\{0,1\}$ via the constraint $\alpha\le\beta$. We describe these constraints in detail below. Once we have solved this program, our algorithm directly constructs $y_i$ by removing all subtrees for which $\alpha_{i,v}=1$.

\textbf{Removal constraints.}
We include constraints bounding how many subtrees we remove, and enforcing that if we remove a subtree, we remove all nodes in that subtree:
\begin{itemize}[topsep=0pt,itemsep=0ex,partopsep=1ex,parsep=0ex,leftmargin=3ex]
\item Eq.~\ref{eqn:opt2}: We remove at most $m$ subtrees.
\item Eq.~\ref{eqn:opt3}: If we remove the subtree at $v$, then we remove $v$.
\item Eq.~\ref{eqn:opt4}: If we remove $v$ and $v'$ is a child of $v$, then we also remove $v'$.
\item Eq.~\ref{eqn:opt5}: If we remove $v$, then we either remove the subtree at $v$ or we remove its parent $v'$.
\item Eq.~\ref{eqn:opt6}: If we remove $v$ for $y_i$, then we also remove it for $y_{i+1}$ (i.e., enforce monotonicity).
\end{itemize}

\textbf{Probability mass constraint.}
Finally, Eq.~\ref{eqn:opt7} ensures that we remove sufficient probability mass from $T$ so $\pi(y_i)$ meets the $\tau_i$ threshold---in particular, it requires that the negative log likelihood of the AST $T_i$ of $y_i$ is below $\tau_i$. Note that we use $1-\beta_{i,v}$ since we are summing over nodes remaining in $T_i$. Also, note that $\ell_v$ is a real-valued constant.

\subsection{Structured PAC prediction sets.}

Given the partial programs $y_i=\tilde{\mathcal{A}}(x,\tau_i)$ computed using $\tilde{\mathcal{A}}$, the remaining question is how we actually choose the parameter $\hat\tau(Z)$. We could construct $\tilde{f}$ as described in the proof of Theorem~\ref{thm:main}; then, by Corollary~\ref{cor:main}, $\pi(\tilde{\mathcal{A}}(x,\hat{\tau}(Z)-\gamma))$ is PAC for sufficiently small $\gamma$.\footnote{In fact, we can take $\gamma=\min_i|\tau_i-\tau_{i+1}|$, where $\tau_i$ are our choices in the design of $\tilde{\mathcal{A}}$ (not the proof of Theorem~\ref{thm:main}).}

However, constructing $\tilde{f}$ can be computationally intractable. To avoid this issue, note that we do not need to explicitly construct $\tilde{f}$; instead, it suffices for such a $\tilde{f}$ to exist (as long as we can find a way to compute the parameter $\hat\tau(Z)$).

To compute $\hat\tau(Z)$, it suffices to be able to compute the number of errors in the validation set for a candidate value of $\tau$. Then, we can solve the maximization problem over $\tau$ in (\ref{eqn:pacalgo}) by enumerating over the fixed choices of parameters $\tau_i$ used by $\tilde{\mathcal{A}}$; since the other values of $\tau$ produce the same prediction sets, we can ignore them.

Given a validation example $(x,y)\in Z$, computing $\mathbbm{1}(y\in F_{\tau_i}(x))$ is straightforward---since $F_{\tau_i}(x)=\pi(y_i)$, we simply check whether $y\in\pi(y_i)$. Doing so is a straightforward tree comparison operation---i.e., we enumerate nodes in $y_i$ (excluding holes) and check if they are all contained in $y$; if so, then $y\in\pi(y_i)$, and if not, then $y\not\in\pi(y_i)$.

In Algorithm~\ref{alg:main}, this search is implemented at the end: it returns the largest $\tau_i$ that achieves a given number of errors $k(\epsilon,\delta,n)$ on the validation dataset, where
\begin{align*}
k(\epsilon,\delta,n)=\max_{k\in\mathbb{N}\cup\{0\}}k
~\text{subj. to}~
\sum_{i=0}^k\binom{n}{i}\epsilon^i(1-\epsilon)^{n-i}<\delta
\end{align*}
is the number of mistakes permitted by an existing PAC prediction set algorithm~\cite{bastani-pac}.

\section{Experiments}

We evaluate our proposed approach on two tasks: semantic parsing for SQL, and Python code completion.


\textbf{SQL generation.}
In this task, the input to the model is a natural language utterance and the target is an SQL program. For the scoring function $f$, we use a state-of-the-art model PICARD~\cite{picard}, which modifies the decoding process of T5~\cite{https://doi.org/10.48550/arxiv.1910.10683} to constrain decoding to valid SQL programs. This model is trained on the Spider dataset~\cite{spider}, a large multi-domain and cross-database dataset for SQL semantic parsing. For our experiments, we use 7,000 examples from Spider to construct prediction sets.

\textbf{Python generation.}
In this task, the input to the model is a natural language problem statement combined with some $k$ lines of code, and the target is the remaining lines of code to complete the solution. We use the Codex~\cite{https://doi.org/10.48550/arxiv.2107.03374}, a GPT language model fine-tuned on publicly available GitHub code with proficiency in over a dozen programming languages including Python, JavaScript, Go, TypeScript, and C\#. For our experiments, we use natural language to Python code datasets including APPS~\cite{hendrycksapps2021} and HumanEval: Hand-Written Evaluation Set~\cite{https://doi.org/10.48550/arxiv.2107.03374}.

\textbf{Baseline.}
We compare to a baseline strategy that greedily chooses the least likely node to remove at each step. More precisely, it uses the same high-level strategy as our approach---i.e., construct a sequence of partial programs where each one contains the previous, and then use an existing prediction set algorithm to choose $\tau$. The difference is in how the sequence of partial programs is constructed---whereas our approach uses optimization to do so, the baseline uses a greedy strategy. In particular, among all nodes of the current partial program with at least one child missing, it chooses to remove the one that most reduces the negative log likelihood (NLL) of the partial program (normalized by the number of nodes removed).

\begin{figure*}[h]
\begin{subfigure}{.45\textwidth}
\centering
\includegraphics[width=3in]{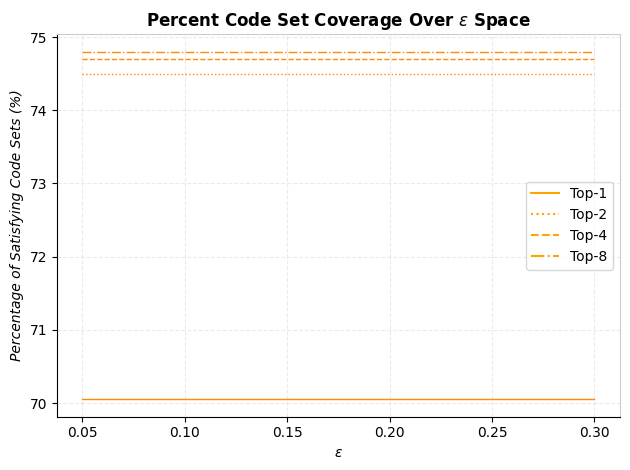}
\caption{Results for Picard on the Spider dataset.}
\label{fig:picard-baseline}
\end{subfigure}
\hfill
\begin{subfigure}{.45\textwidth}
\centering
\includegraphics[width=3in]{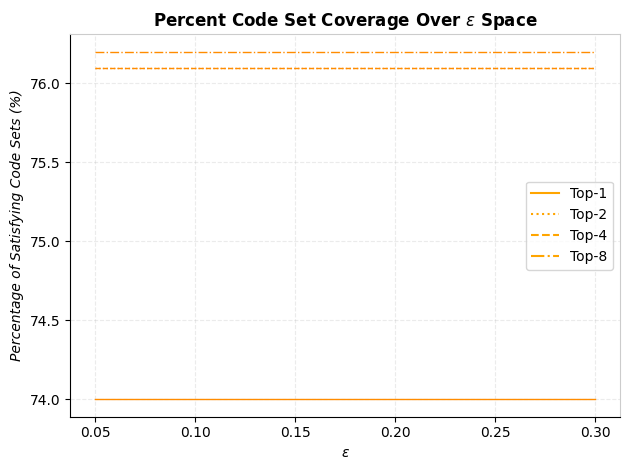}
\caption{Results for Codex on the APPS dataset.}
\label{fig:python-baseline}
\end{subfigure}
\caption{Coverage as a function of $\epsilon$ for the top-$k$ predictions by the base model. By construction, the coverage does not depend on $\epsilon$.}
\label{fig:singleton-baselines}
\end{figure*}

\textbf{Hyperparameters.}
The main hyperparameter is the choice of search space over $\tau$; we used the simple and natural choice of covering uniformly in increments of 0.01. In general, we found that the results are not overly sensitive to the choice of thresholds, as long as they largely covered the possible choices (with the tradeoff that too few choices can lead to suboptimality, whereas too many can increase conservativeness since the search space is larger). In addition, we choose $\delta=0.01$; we vary $\epsilon$ in our results.

\textbf{Results.}
We implemented the structured prediction set algorithm described in section~\ref{algo} to construct PAC prediction sets of code leveraging the abstract syntax tree of the SQL and Python programming languages. The results in Figure~\ref{fig:spider_node_rm} \&~\ref{fig:codex_node_rm} show the fraction of nodes removed from the predicted AST for varying $\epsilon$ and $m$. The results in Figures~\ref{fig:spider_set_coverage} \&~\ref{fig:codex_set_coverage} show the percentage of constructed code sets that include the target program for varying $\epsilon$ and $m$.

For both datasets, our approach constructs prediction sets that remove significantly fewer nodes than the baseline. For instance, with $\epsilon=0.15$, our approach only removes about 45\% of nodes for SQL (vs. 55\% for the baseline), and about 55\% of nodes for Python (vs. 65\% for the baseline). Furthermore, our approach achieves better performance while achieving similar coverage rate (both our approach and the baseline achieve the desired coverage rate in all cases). There is remaining slack compared to the desired coverage rate to account for generalization error; this slack can be reduced by using more samples.

Additionally, we note that the coverage rate decreases as $\epsilon$ increases, demonstrating that our approach is not overly conservative. In particular, the number of nodes removed is due in large part to errors in the underlying language model.

Interestingly, the performance of our algorithm is not very sensitive to $m$, suggesting that $m=1$ typically suffices for constructing prediction sets. Intuitively, this finding suggests that most errors are localized, though more analysis is needed to understand this phenomenon.

Finally, we show the coverage of the top-$k$ predictions of the baseline in Figure~\ref{fig:singleton-baselines}. As can be seen, the coverage of just the top-$1$ example is around 70\% for PICARD and around 74\% for Codex. The coverage can be improved slightly, but not substantially, by increasing $k$. This result justifies our choice of using partial programs to represent prediction sets.

\section{Conclusion}

In this work, we presented an algorithm for constructing PAC prediction sets for large language models for code generation and semantic parsing. Our approach constructs compact prediction sets in the form of partial programs, leveraging an optimization-based approach to construct a monotonic set of candidate prediction sets and then using an existing algorithm to rigorously select a prediction set threshold. Our experiments demonstrate that our approach constructs valid prediction sets while significantly outperforming a na\"{i}ve baseline that uses a greedy heuristic instead of optimization. While our approach is tailored to code generation, we anticipate that the general principle of using optimization to construct compact prediction sets can be applied to many structure prediction problems.

\textbf{Limitations.} We briefly summarize several limitations of our approach. First, users are required to specify an error tolerance for our algorithm to be applied.
Furthermore, our approach relies heavily on the performance of the underlying, well-trained generative model, and may exhibit limitations if the base model is not sufficiently accurate. Finally, we have studied only one way of representing prediction sets, and others may be possible. User studies would be needed to compare the effectiveness of different choices.

\section*{Acknowledgements}
We are grateful to Jeevana Inala for her helpful comments and discussions. This work was supported in part by NSF Award CCF-1910769, NSF Award CCF-1917852, and ARO Award W911NF-20-1-0080.

\bibliography{refs}

\begin{thebibliography}{20}
\providecommand{\natexlab}[1]{#1}
\providecommand{\url}[1]{\texttt{#1}}
\expandafter\ifx\csname urlstyle\endcsname\relax
  \providecommand{\doi}[1]{doi: #1}\else
  \providecommand{\doi}{doi: \begingroup \urlstyle{rm}\Url}\fi

\bibitem[Abdar et~al.(2021)Abdar, Pourpanah, Hussain, Rezazadegan, Liu,
  Ghavamzadeh, Fieguth, Cao, Khosravi, Acharya, Makarenkov, and
  Nahavandi]{sok-uncertainty-quant}
Abdar, M., Pourpanah, F., Hussain, S., Rezazadegan, D., Liu, L., Ghavamzadeh,
  M., Fieguth, P., Cao, X., Khosravi, A., Acharya, U.~R., Makarenkov, V., and
  Nahavandi, S.
\newblock A review of uncertainty quantification in deep learning: Techniques,
  applications and challenges.
\newblock \emph{Information Fusion}, 76:\penalty0 243--297, 2021.
\newblock ISSN 1566-2535.
\newblock \doi{https://doi.org/10.1016/j.inffus.2021.05.008}.
\newblock URL
  \url{https://www.sciencedirect.com/science/article/pii/S1566253521001081}.

\bibitem[Angelopoulos et~al.(2020)Angelopoulos, Bates, Malik, and
  Jordan]{angelopoulos2020uncertainty}
Angelopoulos, A., Bates, S., Malik, J., and Jordan, M.~I.
\newblock Uncertainty sets for image classifiers using conformal prediction.
\newblock \emph{arXiv preprint arXiv:2009.14193}, 2020.

\bibitem[Bates et~al.(2021)Bates, Angelopoulos, Lei, Malik, and
  Jordan]{pred-set-iid-4}
Bates, S., Angelopoulos, A., Lei, L., Malik, J., and Jordan, M.~I.
\newblock Distribution-free, risk-controlling prediction sets, 2021.
\newblock URL \url{https://arxiv.org/abs/2101.02703}.

\bibitem[Brown et~al.(2020)Brown, Mann, Ryder, Subbiah, Kaplan, Dhariwal,
  Neelakantan, Shyam, Sastry, Askell, Agarwal, Herbert-Voss, Krueger, Henighan,
  Child, Ramesh, Ziegler, Wu, Winter, Hesse, Chen, Sigler, Litwin, Gray, Chess,
  Clark, Berner, McCandlish, Radford, Sutskever, and
  Amodei]{https://doi.org/10.48550/arxiv.2005.14165}
Brown, T.~B., Mann, B., Ryder, N., Subbiah, M., Kaplan, J., Dhariwal, P.,
  Neelakantan, A., Shyam, P., Sastry, G., Askell, A., Agarwal, S.,
  Herbert-Voss, A., Krueger, G., Henighan, T., Child, R., Ramesh, A., Ziegler,
  D.~M., Wu, J., Winter, C., Hesse, C., Chen, M., Sigler, E., Litwin, M., Gray,
  S., Chess, B., Clark, J., Berner, C., McCandlish, S., Radford, A., Sutskever,
  I., and Amodei, D.
\newblock Language models are few-shot learners, 2020.
\newblock URL \url{https://arxiv.org/abs/2005.14165}.

\bibitem[Chen et~al.(2021)Chen, Tworek, Jun, Yuan, Pinto, Kaplan, Edwards,
  Burda, Joseph, Brockman, Ray, Puri, Krueger, Petrov, Khlaaf, Sastry, Mishkin,
  Chan, Gray, Ryder, Pavlov, Power, Kaiser, Bavarian, Winter, Tillet, Such,
  Cummings, Plappert, Chantzis, Barnes, Herbert-Voss, Guss, Nichol, Paino,
  Tezak, Tang, Babuschkin, Balaji, Jain, Saunders, Hesse, Carr, Leike, Achiam,
  Misra, Morikawa, Radford, Knight, Brundage, Murati, Mayer, Welinder, McGrew,
  Amodei, McCandlish, Sutskever, and
  Zaremba]{https://doi.org/10.48550/arxiv.2107.03374}
Chen, M., Tworek, J., Jun, H., Yuan, Q., Pinto, H. P. d.~O., Kaplan, J.,
  Edwards, H., Burda, Y., Joseph, N., Brockman, G., Ray, A., Puri, R., Krueger,
  G., Petrov, M., Khlaaf, H., Sastry, G., Mishkin, P., Chan, B., Gray, S.,
  Ryder, N., Pavlov, M., Power, A., Kaiser, L., Bavarian, M., Winter, C.,
  Tillet, P., Such, F.~P., Cummings, D., Plappert, M., Chantzis, F., Barnes,
  E., Herbert-Voss, A., Guss, W.~H., Nichol, A., Paino, A., Tezak, N., Tang,
  J., Babuschkin, I., Balaji, S., Jain, S., Saunders, W., Hesse, C., Carr,
  A.~N., Leike, J., Achiam, J., Misra, V., Morikawa, E., Radford, A., Knight,
  M., Brundage, M., Murati, M., Mayer, K., Welinder, P., McGrew, B., Amodei,
  D., McCandlish, S., Sutskever, I., and Zaremba, W.
\newblock Evaluating large language models trained on code, 2021.
\newblock URL \url{https://arxiv.org/abs/2107.03374}.

\bibitem[Guo et~al.(2021)Guo, Svyatkovskiy, Yin, Duan, Brockschmidt, and
  Allamanis]{guo2021learning}
Guo, D., Svyatkovskiy, A., Yin, J., Duan, N., Brockschmidt, M., and Allamanis,
  M.
\newblock Learning to complete code with sketches.
\newblock In \emph{International Conference on Learning Representations}, 2021.

\bibitem[Hendrycks et~al.(2021)Hendrycks, Basart, Kadavath, Mazeika, Arora,
  Guo, Burns, Puranik, He, Song, and Steinhardt]{hendrycksapps2021}
Hendrycks, D., Basart, S., Kadavath, S., Mazeika, M., Arora, A., Guo, E.,
  Burns, C., Puranik, S., He, H., Song, D., and Steinhardt, J.
\newblock Measuring coding challenge competence with apps.
\newblock \emph{NeurIPS}, 2021.

\bibitem[Iqbal \& Qureshi(2022)Iqbal and Qureshi]{textgen}
Iqbal, T. and Qureshi, S.
\newblock The survey: Text generation models in deep learning.
\newblock \emph{Journal of King Saud University - Computer and Information
  Sciences}, 34\penalty0 (6, Part A):\penalty0 2515--2528, 2022.
\newblock ISSN 1319-1578.
\newblock \doi{https://doi.org/10.1016/j.jksuci.2020.04.001}.
\newblock URL
  \url{https://www.sciencedirect.com/science/article/pii/S1319157820303360}.

\bibitem[Liu et~al.(2022)Liu, Jiang, Monath, Cotterell, and
  Sachan]{https://doi.org/10.48550/arxiv.2210.14698}
Liu, T., Jiang, Y., Monath, N., Cotterell, R., and Sachan, M.
\newblock Autoregressive structured prediction with language models, 2022.
\newblock URL \url{https://arxiv.org/abs/2210.14698}.

\bibitem[Park et~al.(2020)Park, Bastani, Matni, and Lee]{bastani-pac}
Park, S., Bastani, O., Matni, N., and Lee, I.
\newblock Pac confidence sets for deep neural networks via calibrated
  prediction, 2020.
\newblock URL \url{https://arxiv.org/abs/2001.00106}.

\bibitem[Raffel et~al.(2019)Raffel, Shazeer, Roberts, Lee, Narang, Matena,
  Zhou, Li, and Liu]{https://doi.org/10.48550/arxiv.1910.10683}
Raffel, C., Shazeer, N., Roberts, A., Lee, K., Narang, S., Matena, M., Zhou,
  Y., Li, W., and Liu, P.~J.
\newblock Exploring the limits of transfer learning with a unified text-to-text
  transformer, 2019.
\newblock URL \url{https://arxiv.org/abs/1910.10683}.

\bibitem[Scholak et~al.(2021)Scholak, Schucher, and Bahdanau]{picard}
Scholak, T., Schucher, N., and Bahdanau, D.
\newblock Picard: Parsing incrementally for constrained auto-regressive
  decoding from language models, 2021.
\newblock URL \url{https://arxiv.org/abs/2109.05093}.

\bibitem[Schulz \& Behnke()Schulz and Behnke]{schulz_behnke}
Schulz, H. and Behnke, S.
\newblock Structured prediction for object detection in deep neural networks.
\newblock URL \url{https://www.ais.uni-bonn.de/papers/icann2014_schulz.pdf}.

\bibitem[Sennrich et~al.(2015)Sennrich, Haddow, and Birch]{bpe}
Sennrich, R., Haddow, B., and Birch, A.
\newblock Neural machine translation of rare words with subword units, 2015.
\newblock URL \url{https://arxiv.org/abs/1508.07909}.

\bibitem[Solar-Lezama(2008)]{solar2008program}
Solar-Lezama, A.
\newblock \emph{Program synthesis by sketching}.
\newblock University of California, Berkeley, 2008.

\bibitem[Sutskever et~al.(2014)Sutskever, Vinyals, and
  Le]{https://doi.org/10.48550/arxiv.1409.3215}
Sutskever, I., Vinyals, O., and Le, Q.~V.
\newblock Sequence to sequence learning with neural networks, 2014.
\newblock URL \url{https://arxiv.org/abs/1409.3215}.

\bibitem[Tibshirani et~al.(2019)Tibshirani, Foygel~Barber, Candes, and
  Ramdas]{set-cov-shift-1}
Tibshirani, R.~J., Foygel~Barber, R., Candes, E., and Ramdas, A.
\newblock Conformal prediction under covariate shift.
\newblock In Wallach, H., Larochelle, H., Beygelzimer, A., d\textquotesingle
  Alch\'{e}-Buc, F., Fox, E., and Garnett, R. (eds.), \emph{Advances in Neural
  Information Processing Systems}, volume~32. Curran Associates, Inc., 2019.
\newblock URL
  \url{https://proceedings.neurips.cc/paper/2019/file/8fb21ee7a2207526da55a679f0332de2-Paper.pdf}.

\bibitem[Vovk et~al.(2005)Vovk, Gammerman, and Shafer]{pred-set-iid-2}
Vovk, V., Gammerman, A., and Shafer, G.
\newblock \emph{Algorithmic Learning in a Random World}.
\newblock Springer-Verlag, Berlin, Heidelberg, 2005.
\newblock ISBN 0387001522.

\bibitem[Wilks(1941)]{pred-set-iid-1}
Wilks, S.~S.
\newblock {Determination of Sample Sizes for Setting Tolerance Limits}.
\newblock \emph{The Annals of Mathematical Statistics}, 12\penalty0
  (1):\penalty0 91 -- 96, 1941.
\newblock \doi{10.1214/aoms/1177731788}.
\newblock URL \url{https://doi.org/10.1214/aoms/1177731788}.

\bibitem[Yu et~al.(2018)Yu, Zhang, Yang, Yasunaga, Wang, Li, Ma, Li, Yao,
  Roman, Zhang, and Radev]{spider}
Yu, T., Zhang, R., Yang, K., Yasunaga, M., Wang, D., Li, Z., Ma, J., Li, I.,
  Yao, Q., Roman, S., Zhang, Z., and Radev, D.
\newblock Spider: A large-scale human-labeled dataset for complex and
  cross-domain semantic parsing and text-to-sql task, 2018.
\newblock URL \url{https://arxiv.org/abs/1809.08887}.

\end{thebibliography}
\bibliographystyle{icml2023}


\newpage
\appendix
\onecolumn
\section{Sample Outputs}

\begin{figure*}[h]
\begin{subfigure}{.45\textwidth}
\centering
\includegraphics[width=3.1 in]{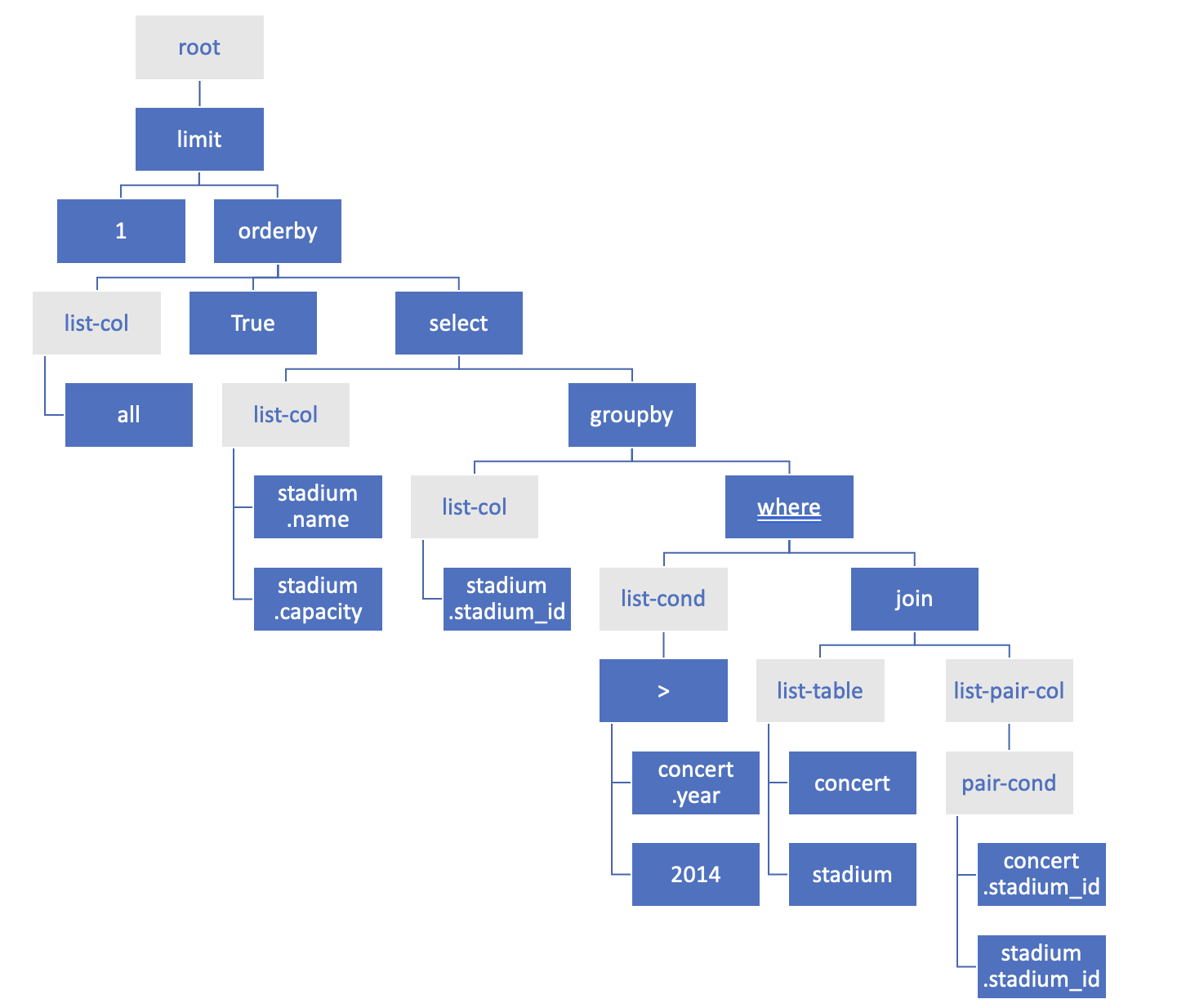}
{\scriptsize
\begin{align*}
&\texttt{SELECT t1.name, t1.capacity FROM stadium AS t1} \\
&\texttt{JOIN concert AS t2 ON t1.stadium\_id = t2.stadium\_id} \\
&\texttt{WHERE t2.year > 2014} \\
&\texttt{GROUP BY t1.stadium\_id} \\
&\texttt{ORDER BY COUNT(*)} \\
&\texttt{DESC LIMIT 1;}
\end{align*}}
\caption{Ground truth abstract syntax tree (AST) and SQL query from the Spider dataset \cite{spider}}
\label{fig:target-ast-spider-2}
\end{subfigure}
\hfill
\begin{subfigure}{.45\textwidth}
\centering
\includegraphics[width=3.1 in]{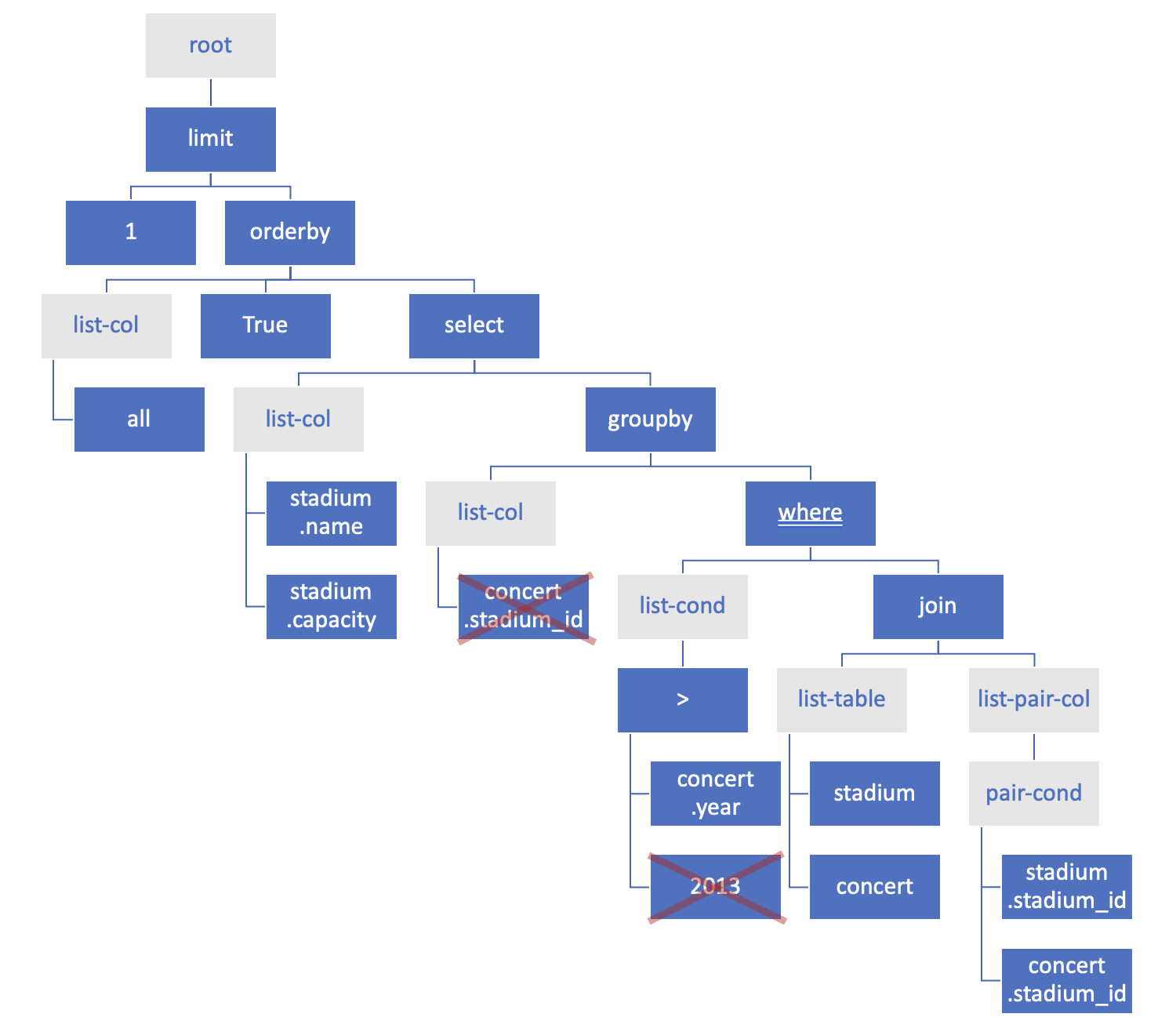}
{\scriptsize
\begin{align*}
&\texttt{SELECT t1.name, t1.capacity FROM stadium AS t1} \\
&\texttt{JOIN concert AS t2 ON t1.stadium\_id = t2.stadium\_id} \\
&\texttt{WHERE t2.year > 2013} \\
&\texttt{GROUP BY t2.stadium\_id} \\
&\texttt{ORDER BY COUNT(*)} \\
&\texttt{DESC LIMIT 1;} \\\\
&\texttt{SELECT t1.name, t1.capacity FROM stadium AS t1} \\
&\texttt{JOIN concert AS t2 ON t1.stadium\_id = t2.stadium\_id} \\
&\texttt{WHERE t2.year > ??} \\
&\texttt{GROUP BY ??} \\
&\texttt{ORDER BY COUNT(*)} \\
&\texttt{DESC LIMIT 1;}
\end{align*}}
\caption{Predicted AST, predicted SQL Query, and resulting prediction set for the same task as in \ref{fig:target-ast-spider-2} with $m=2$ holes. Note that the model predicts the full AST depicted above, and our PAC structured prediction set algorithm removes two subtrees (nodes removed shown with an ``x").}
\label{fig:predicted-set-spider-2}
\end{subfigure}
\caption{Example of ground truth SQL query, predicted SQL query, and the constructed prediction set. Note that without the subtree removal in the predicted AST, the resulting query is incorrect. With the two holes in the SQL query, the code prediction set contains the ground truth query.}
\label{fig:spider-example-2}
\end{figure*}

\begin{figure*}[h]
\begin{subfigure}{.45\textwidth}
\centering
\includegraphics[width=2 in]{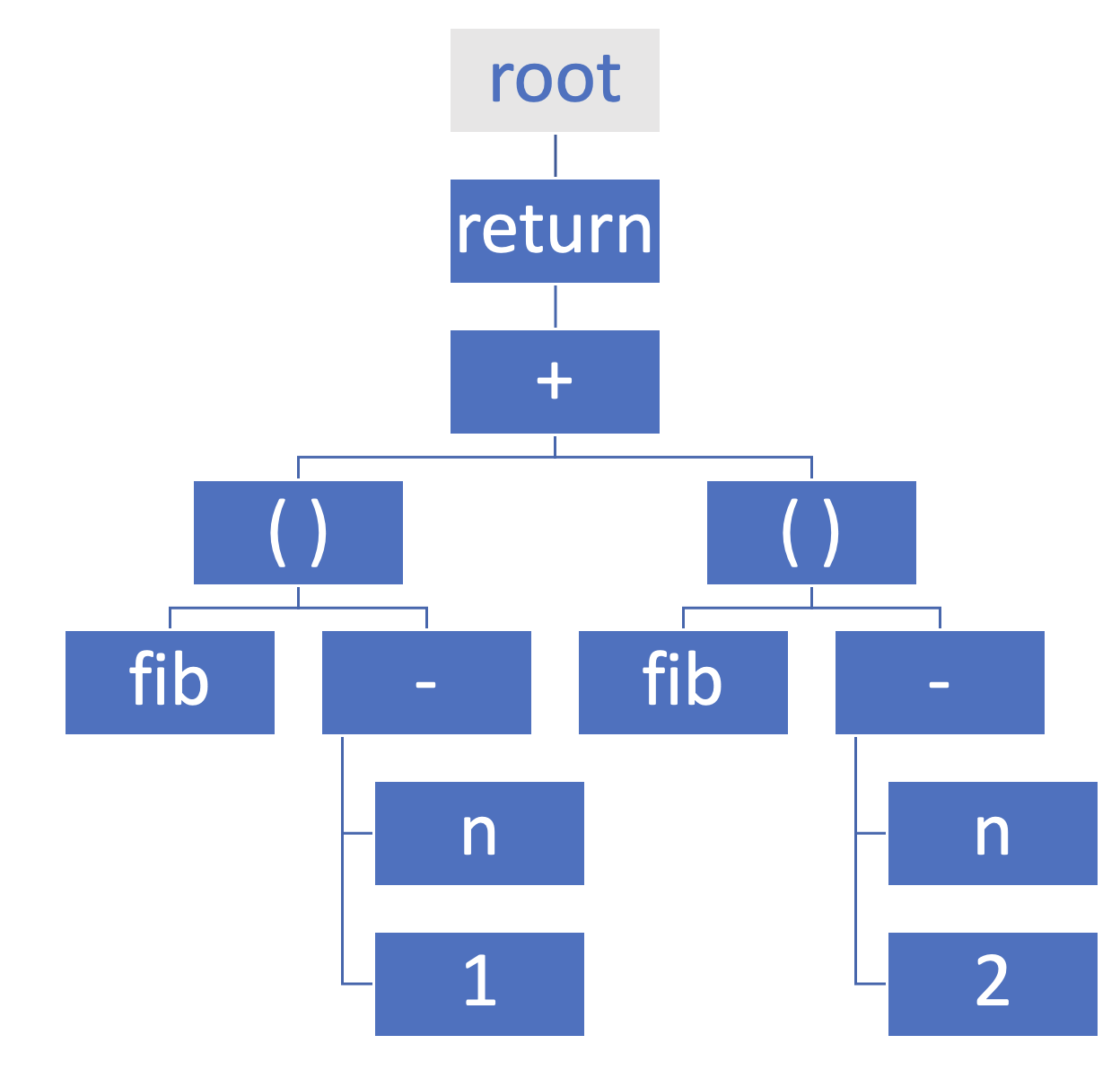}
{\scriptsize
\begin{align*}
&\texttt{return fib(n-1) + fib(n-2)} \\
\end{align*}}
\caption{Ground truth abstract syntax tree (AST) and Python line of code in the APPS dataset \cite{https://doi.org/10.48550/arxiv.2107.03374}.}
\label{fig:target-ast-codex-m1}
\end{subfigure}
\hfill
\begin{subfigure}{.45\textwidth}
\centering
\includegraphics[width=2 in]{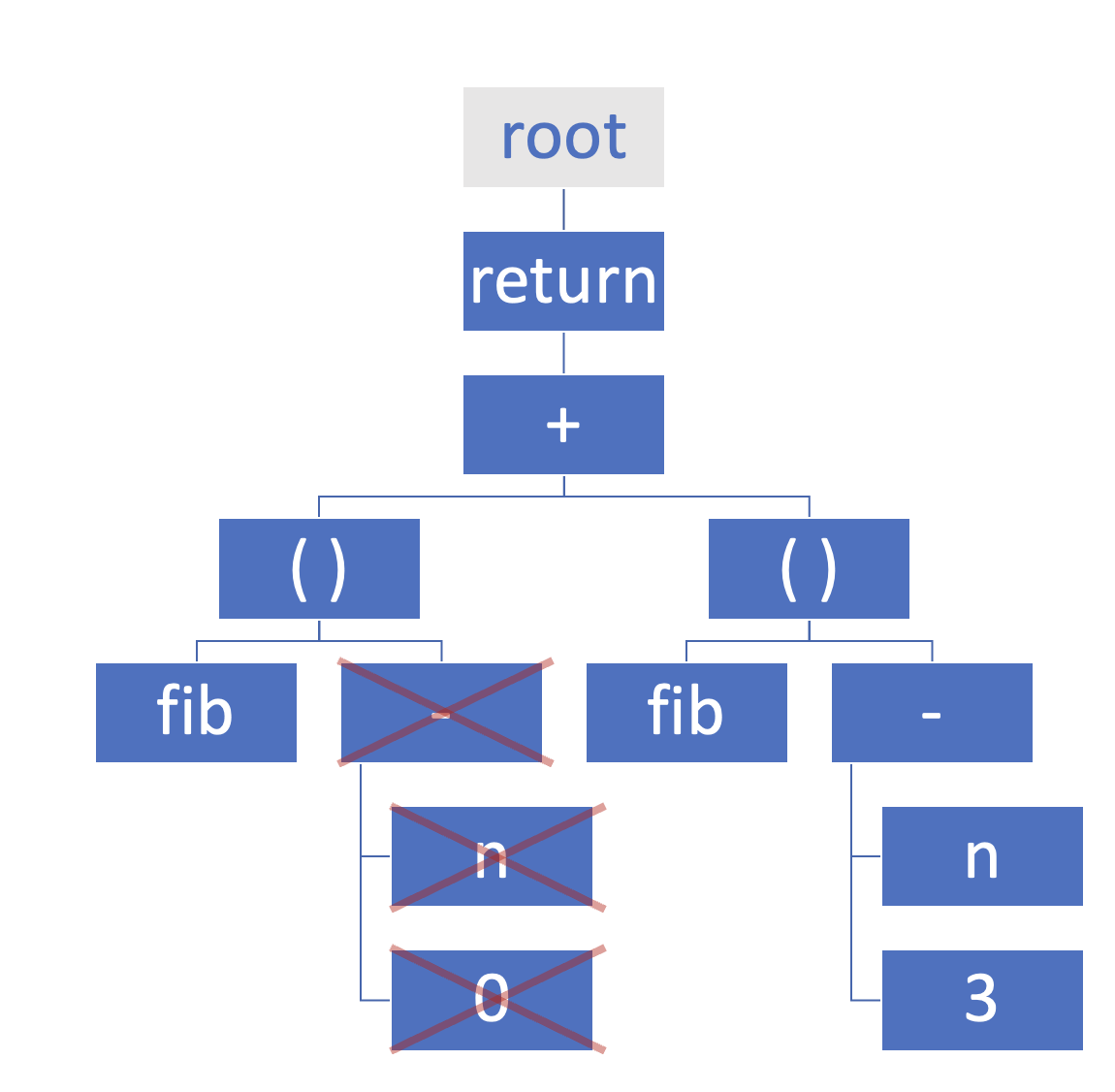}
{\scriptsize
\begin{align*}
&\texttt{return fib(n-0) + fib(n-3)} \\\\
&\texttt{return fib(??) + fib(n-3)}
\end{align*}}
\caption{Predicted AST, predicted Python code, and prediction set for the same task as in \ref{fig:target-ast-codex-m1} with $m=1$ hole1. Note that the model predicts the full AST depicted above, and our PAC structured prediction set algorithm removes a subtree (nodes removed shown with an ``x").}
\label{fig:predicted-set-codex-m1}
\end{subfigure}
\caption{Example of ground truth Python line of code, predicted line of code, and constructed prediction set with a single subtree removal, $m=1$. Note that with a single subtree removal, the resulting code prediction set does not contain the ground truth code.}
\label{fig:codex-example-m1}
\end{figure*}

\begin{figure*}[h]
\begin{subfigure}{.45\textwidth}
\centering
\includegraphics[width=2 in]{exhibits/codex_ast_target.png}
{\scriptsize
\begin{align*}
&\texttt{return fib(n-1) + fib(n-2)} \\
\end{align*}}
\caption{Ground truth abstract syntax tree (AST) and Python line of code in the APPS dataset \cite{https://doi.org/10.48550/arxiv.2107.03374}.}
\label{fig:target-ast-codex}
\end{subfigure}
\hfill
\begin{subfigure}{.45\textwidth}
\centering
\includegraphics[width=2 in]{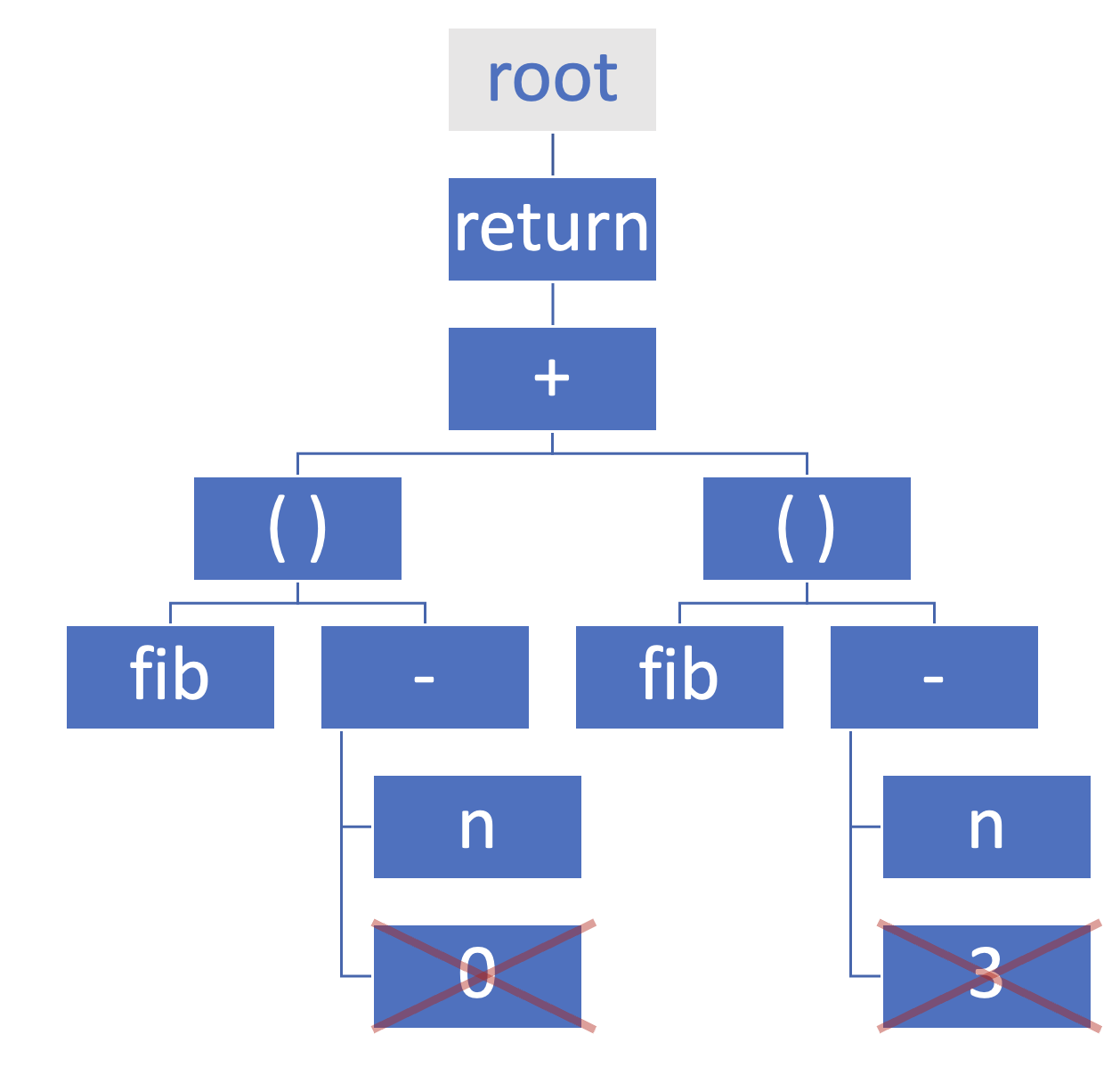}
{\scriptsize
\begin{align*}
&\texttt{return fib(n-0) + fib(n-3)} \\\\
&\texttt{return fib(n-??) + fib(n-??)}
\end{align*}}
\caption{Predicted AST, predicted Python code, and prediction set for the same task as in \ref{fig:target-ast-codex} with $m=2$ holes. Note that the model predicts the full AST depicted above, and our PAC structured prediction set algorithm removes two subtrees (nodes removed shown with an ``x").}
\label{fig:predicted-set-codex}
\end{subfigure}
\caption{Example of ground truth Python line of code, predicted line of code, and constructed prediction set with two subtrees removed, $m=2$. Note that without the subtree removal in the predicted AST, the resulting query is incorrect. With the two holes in the Python line of code, the code prediction set contains the ground truth code.}
\label{fig:codex-example-m2}
\end{figure*}

\begin{figure*}[h]
\begin{subfigure}{.45\textwidth}
\centering
\includegraphics[width=2.5 in]{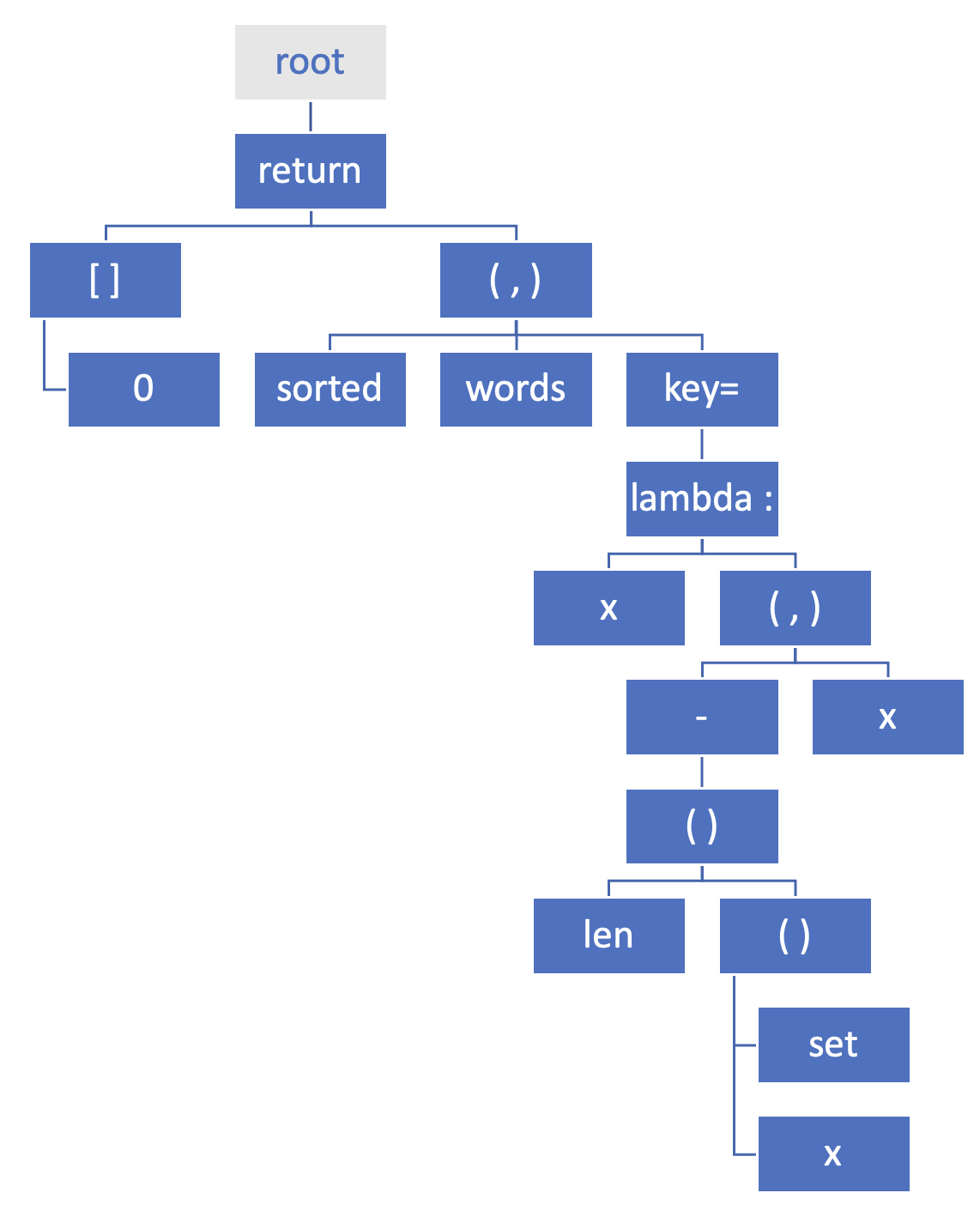}
{\scriptsize
\begin{align*}
&\texttt{return sorted(words, key = lambda x: (-len(set(x)), x))[0]}
\end{align*}}
\caption{Ground truth abstract syntax tree (AST) and Python line of code from the APPS dataset \cite{https://doi.org/10.48550/arxiv.2107.03374}.}
\label{fig:target-ast-codex-2}
\end{subfigure}
\hfill
\begin{subfigure}{.45\textwidth}
\centering
\includegraphics[width=2.5 in]{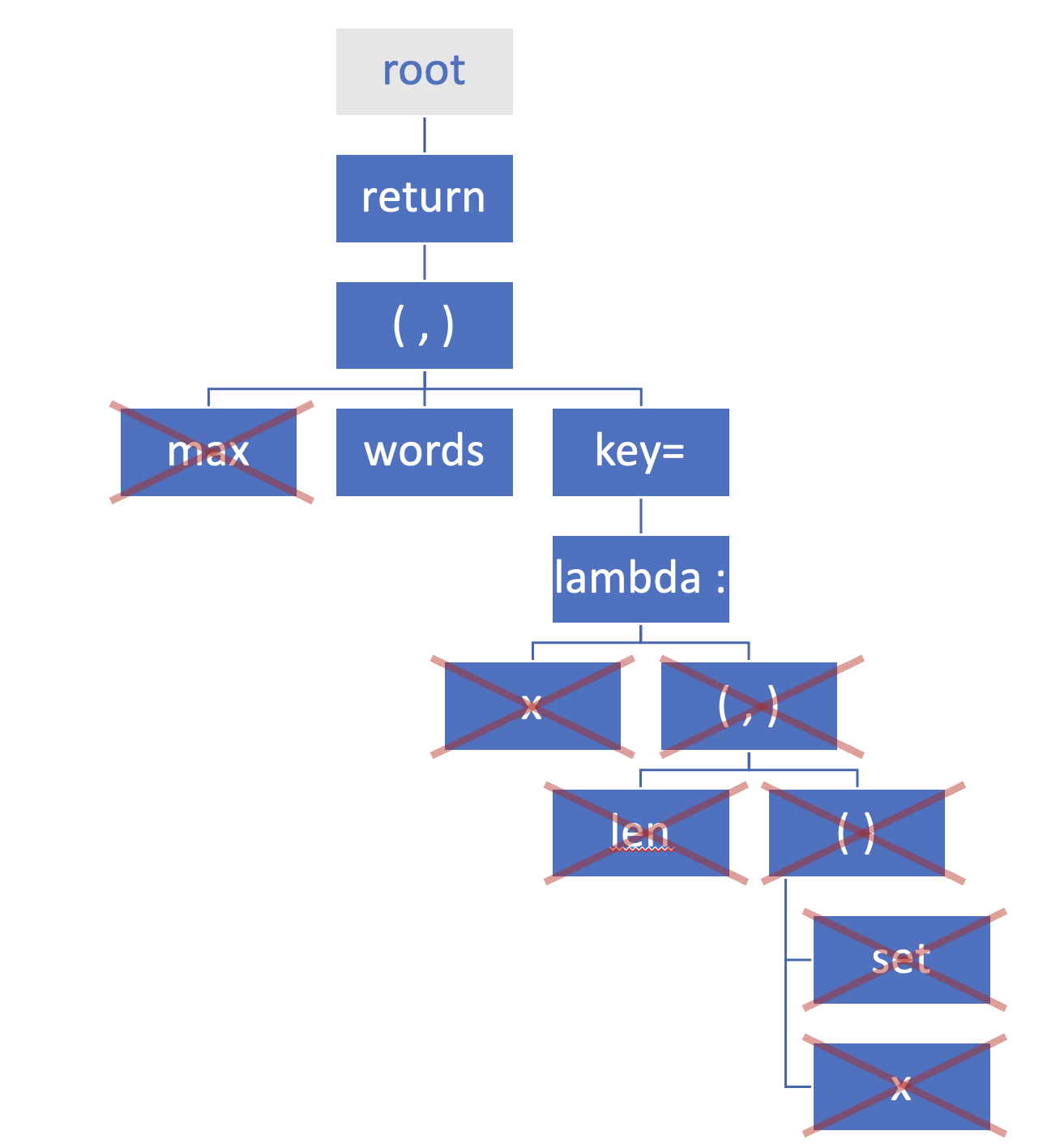}
{\scriptsize
\begin{align*}
&\texttt{return max(words, key=lambda x: len(set(x)))} \\
&\texttt{return ??(words, key=lambda : ??)}
\end{align*}}
\caption{Predicted AST and resulting code-prediction set for the same task as in \ref{fig:target-ast-codex-2} with $m=3$ holes. Note that the model predicts the full AST depicted above, and our PAC structured prediction set algorithm removes three subtrees (nodes removed shown with an ``x").}
\label{fig:predicted-set-codex-2}
\end{subfigure}
\caption{Example of ground truth Python line of code, predicted line of code, and constructed prediction set with three subtrees removed, $m=3$. Note that without the subtree removals in the predicted AST, the resulting query is incorrect. With the three holes in the Python line of code, the code prediction set contains the ground truth code.}
\label{fig:codex-example-2}
\end{figure*}


\end{document}